%% file: old_arxiv.tex
\documentclass[reqno]{amsart}		%
\usepackage[margin=1.5in,bottom=1.25in]{geometry}		%

\usepackage{amsmath}		%
\usepackage{amssymb}		%
\usepackage{amsfonts}		%
\usepackage{amsthm}		%
\usepackage[foot]{amsaddr}		%

\usepackage{mathtools}		%

\mathtoolsset{%
}

\usepackage[utf8]{inputenc}		%
\usepackage[T1]{fontenc}		%

\usepackage{dsfont}		%

\usepackage[%
cal=cm,
]
{mathalfa}

\usepackage{acronym}		%

\usepackage[labelfont={bf,small},labelsep=colon,font=small]{caption}	%

\usepackage[dvipsnames,svgnames]{xcolor}		%
\colorlet{MyRed}{Crimson!90!Black}
\colorlet{MyBlue}{MediumBlue!90!Black}
\colorlet{MyGreen}{DarkGreen!80!Black}

\newcommand{\afterhead}{}		
\renewcommand{\paragraph}[1]{\medskip{\textbf{#1\afterhead}}}

\usepackage{pifont}

\usepackage{subcaption}		%
\usepackage{tikz}		%
\usetikzlibrary{calc,patterns,positioning}

\usepackage{pgfplots}
\pgfplotsset{compat=1.18}
\usepackage{forest}

\usepackage{booktabs}		%
\usepackage[inline,shortlabels]{enumitem}		%
\setenumerate{itemsep=\smallskipamount,topsep=\medskipamount}
\setitemize{itemsep=\smallskipamount,topsep=\medskipamount}

\usepackage{multirow}

\usepackage[kerning=true]{microtype}		%

\usepackage{xspace}		%

\usepackage[sort&compress,authoryear,round]{natbib}					%

\bibpunct[, ]{[}{]}{,}{n}{,}{,}

\renewcommand{\citet}{\cite}

\usepackage{hyperref}
\hypersetup{
colorlinks=true,
linktocpage=true,
pdfstartview=FitH,
breaklinks=true,
pdfpagemode=UseNone,
pageanchor=true,
pdfpagemode=UseOutlines,
plainpages=false,
bookmarksnumbered,
bookmarksopen=false,
bookmarksopenlevel=1,
hypertexnames=true,
pdfhighlight=/O,
urlcolor=MyBlue,linkcolor=MyBlue,citecolor=MyBlue,	%
pdftitle={},
pdfauthor={},
pdfsubject={},
pdfkeywords={},
pdfcreator={pdfLaTeX},
pdfproducer={LaTeX with hyperref}
}

\usepackage[linesnumbered,ruled,vlined]{algorithm2e}

\SetCommentSty{mycommfont} %

\usepackage[sort&compress,capitalize,nameinlink]{cleveref}		%
\crefname{algorithm}{Algorithm}{Algorithms}
\crefname{equation}{Eq.}{Eqs.}

\usepackage{thmtools}		%
\usepackage{thm-restate}		%

\theoremstyle{plain}
\newtheorem{theorem}{Theorem}		%
\newtheorem{lemma}{Lemma}		%
\newtheorem{proposition}{Proposition}		%

\newtheorem*{corollary*}{Corollary}		%

\theoremstyle{definition}
\newtheorem{definition}{Definition}		%

\newtheorem*{definition*}{Definition}		%
\newtheorem*{assumption*}{Assumptions}		%
\newtheorem*{example*}{Example}		%

\theoremstyle{remark}
\newtheorem{remark}{Remark}		%
\newtheorem*{remark*}{Remark}		%

\usepackage{adjustbox}

\input{macros}

\usepackage{floatrow}
\newfloatcommand{capbtabbox}{table}[][\FBwidth]
\usepackage{graphicx}

\begin{document}

\title
{Byzantine Machine Learning: MultiKrum and an optimal notion of robustness}

\author
{Gilles Bareilles$^{*, \dagger}$}
\address{$^{*}$\,%
Equal contribution.}
\address{$^{\dagger}$\,%
CMAP École Polytechnique, France.}
\email{first.last@polytechnique.edu, first.last@epfl.ch}

\author
{Wassim Bouaziz$^{*, \ddagger}$}
\address{$^{\ddagger}$\,%
Mistral AI, France -- work done at CMAP École Polytechnique.}

\author
{Julien Fageot$^{*, \diamond}$}
\address{$^{\diamond}$\,%
EPFL, Switzerland.}

\author
{El Mahdi El Mhamdi$^{\dagger}$}

\begin{abstract}
\input{abstract}

\end{abstract}

\maketitle
\allowdisplaybreaks		%
\acresetall		%

\section{Introduction}
\label{sec:introduction}
\input{Body/introduction}

\section{Robustness of aggregators}
\label{sec:recalls}
\input{Body/preliminaries}

\section{The bounds on \texorpdfstring{$m$}{m}-MultiKrum's \texorpdfstring{$\kappa^\star_m$}{km*}}
\label{sec:robustnessmkrum}
\input{Body/mainresults}

\section{Proofs}
\label{sec:lemmas}
\input{Body/prepa}

\section{Experimental illustrations}
\label{sec:exps}
\input{Body/experiments}

\section{Concluding remarks}
\label{sec:conclusion}
\input{Body/conclusion}

\appendix
\setcounter{remark}{0}
\numberwithin{equation}{section}		%
\numberwithin{lemma}{section}		%
\numberwithin{proposition}{section}		%
\numberwithin{theorem}{section}		%
\numberwithin{corollary}{section}		%

\section{Recalls on elementary relations}
\label{sec:app_recalls}
\input{Body/app_recalls}

\section{A Lower bound for \texorpdfstring{$m$}{m}-MultiKrum, \texorpdfstring{$m$}{m} arbitrary}
\label{sec:app_lowerbound}
\input{Body/app_lowerbound}

\bibliographystyle{amsalpha}
\bibliography{bibliography.bib}

\end{document}

%% file: macros.tex
\newcommand{\newmacro}[2]{\newcommand{#1}{\debug{#2}}}		%
\newcommand{\newop}[2]{\DeclareMathOperator{#1}{\debug{#2}}}		%

\newcommand{\debug}[1]{#1}		%

\newcommand{\ie}{\emph{i.e.},}

\newmacro{\MKr}{\textrm{MKr}}

\usepackage{color-edits}
\addauthor{gb}{blue}
\addauthor{ju}{orange}
\addauthor{wes}{red}
\addauthor{elm}{green}

\newmacro{\obj}{f}		%
\newmacro{\objalt}{g}		%
\newmacro{\sobj}{F}		%
\newmacro{\gvec}{g}		%
\newmacro{\gbound}{G}		%

\newmacro{\param}{\theta}		%
\newmacro{\params}{\Theta}		%

\newmacro{\oper}{A}		%
\newmacro{\vecfield}{V}		%
\newmacro{\vbound}{L}		%

\newcommand{\bbN}{\mathbb N}

\newcommand{\bbR}{\mathbb R}

\newmacro{\nhd}{\mathcal{N}}  %
\newmacro{\Sms}{S_m^\star}
\newmacro{\xcol}{\mathcal{X}}

\newmacro{\ball}{\mathcal{B}}
\newop{\D}{D}

\newop{\parts}{\mathcal{P}}

\newmacro{\eqdef}{\triangleq}
\newmacro{\defeq}{\triangleq}

\newmacro{\kconst}{\kappa_1}
\newmacro{\kdec}{\kappa_2}
\newmacro{\kdeca}{\kappa_2^<}
\newmacro{\kdecb}{\kappa_2^>}

\newmacro{\kgm}{\kappa_\text{gm}}

\newmacro{\introa}{Consider two integers $f$, $n$ such that $f < n$, a collection $\xcol = (x_1, \dots, x_n) \in (\bbR^d)^n$ of $n$ vectors, and a subset $S$ of $[n]$ of size $|S| = n-f$.}
\newmacro{\introb}{Consider two integers $f$, $n$ such that $n - 2f>0$, a collection $\xcol = (x_1, \dots, x_n) \in (\bbR^d)^n$ of $n$ vectors, and a subset $S$ of $[n]$ of size $|S| = n-f$.}
\newmacro{\introbm}{Consider integers $f$, $n$ and $m$ such that $n - 2f>0$, $0 < m \le n-f$, a collection $\xcol = (x_1, \dots, x_n) \in (\bbR^d)^n$ of $n$ vectors, and a subset $S$ of $[n]$ of size $|S| = n-f$.}
\newmacro{\introc}{Consider integers $f$, $n$, $m$ such that $n - 2f>0$ and $0 < m \le n-f$.}

%% file: abstract.tex
Aggregation rules are the cornerstone of distributed (or federated) learning in the presence of adversaries, under the so-called Byzantine threat model. They are also interesting mathematical objects from the point of view of robust mean estimation.
The Krum aggregation rule has been extensively studied, and endowed with formal robustness and convergence guarantees.
Yet, MultiKrum, a natural extension of Krum, is often preferred in practice for its superior empirical performance, even though no theoretical guarantees were available until now.
In this work, we provide the first proof that MultiKrum is a robust aggregation rule, and bound its robustness coefficient.
To do so, we introduce $\kappa^\star$, the optimal \emph{robustness coefficient} of an aggregation rule, which quantifies the accuracy of mean estimation in the presence of adversaries in a tighter manner compared with previously adopted notions of robustness.
We then construct an upper and a lower bound on MultiKrum's robustness coefficient.
As a by-product, we also improve on the best-known bounds on Krum's robustness coefficient.
We show that MultiKrum's bounds are never worse than Krum's, and better in realistic regimes.
We illustrate this analysis by an experimental investigation on the quality of the lower bound.

%% file: Body/introduction.tex
Modern machine learning tasks in vision, natural language processing, and item recommendation involves datasets and models of ever growing size, and now crucially rely on distributing data and models across multiple compute devices.
In this distributed learning setup, a central device (or a decentralized protocol simulating a central device) orchestrates the learning task, while others perform computations based on local data or models.

While enabling scalability, distributed learning also introduces vulnerabilities, such as adversarial attacks during training \citep{blanchardMachineLearningAdversaries2017b, mhamdiHiddenVulnerabilityDistributed2018, alistarh2018byzantine, xieFallEmpiresBreaking2020, el2021collaborative, hoang2021tournesol}, attacks on the dataset \citep{cina2023wild, bouaziz2024inverting, nguyen2024manipulating}, or on the model itself \citep{wang2020attack, li2025backdoor}.
These threats have given rise to the field of \emph{Byzantine} machine learning, inspired by Byzantine fault tolerance in distributed computing~\citep{lamport1982Byzantine}, where a fraction of the devices may behave adversarially. 

In distributed, collaborative, and federated learning \citep{bouhataByzantineFaultTolerance, kairouzAdvancesOpenProblems2021}, \citet{el2021collaborative} established an equivalence between gradient-based robust distributed learning and robust mean estimation, allowing several aspects of learning to be abstracted into a mean estimation problem.

Central to this framework is the \emph{aggregation rule} used to combine the inputs from honest and Byzantine workers, without knowing which is which.
Early work includes the geometric median \citep{rousseeuw1985multivariate}, followed by Krum and MultiKrum \citep{blanchardMachineLearningAdversaries2017b}, the identification of vulnerabilities in distance-based methods and the introduction of Bulyan \citep{mhamdiHiddenVulnerabilityDistributed2018}, and analyses of coordinate-wise aggregation rules \citep{yinByzantineRobustDistributedLearning2018}.
Moreover, \citet{farhadkhani2022equivalence, bouaziz2024inverting} suggest equivalences between gradient attacks that occur in distributed learning and data poisoning attacks that more are general and occur even in single-machine settings, and, in the context of large AI models such as LLMs, even in an indirect manner \citep{bouaziz2025winter} and with a near-constant number of samples \citep{souly2025poisoning}.
Improving the security of robust aggregation has thus an importance in ML that goes beyond the distributed setting where machines exchange gradients and models updates.

Alongside these developments, robustness formalizations evolved from bounds on tolerable adversaries, through $(f,\alpha)$-Byzantine-resilience \citep{blanchardMachineLearningAdversaries2017b}, to the currently accepted notion of $(f,\kappa)$-robustness \citep{allouahFixingMixingRecipe2023}.
This notion adopts a worst-case perspective, requiring the aggregation output remains within a bounded distance of the honest vectors average, up to their intrinsic heterogeneity.

This notion is appealing: common aggregation rules are $(f,\kappa)$-robust for some $\kappa$, which notably guarantees that Distributed Stochastic Gradient Descent converges to a point minimizing the \emph{loss of the honest} devices, with a degradation bounded linearly in $\kappa$.
Nevertheless, $(f,\kappa)$-robustness fails to characterize robustness through a unique and well-defined quantity, preventing users from making informed decisions when navigating the trade-offs between robustness and optimization performance induced by different aggregation rules.

To address this limitation, we introduce the \emph{robustness coefficient} $\kappa^\star$ of an aggregation rule as a unique quantity, defined as the solution of an optimization problem.
This notion extends that of $(f,\kappa)$-robustness.
We also define the complementary notions of \emph{upper} and \emph{lower bound} on the robustness coefficient.
This clarifies that several $(f,\kappa)$-robustness guarantees previously considered tight are, in fact, loose.
Our definition of $\kappa^\star$ is optimal as it captures the smallest constant that allows to bound the distance between the aggregated output and the average of the honest participants, rescaled by the heterogeneity of the honest participants, in the worst-case scenario on the identity of the honest participants.\footnote{
Note that we did not refer to it as ``the'' optimal, but rather as ``an'' optimal, since other desirable characteristics than the mere proximity to the honest average lead to other criteria.}

Independent of theoretical investigations, practitioners show a clear preference to the \emph{MultiKrum} aggregation rule, an extension of the \emph{Krum} aggregation, over Krum \citep{blanchardMachineLearningAdversaries2017b}.
To the best of our knowledege, there is no proof of robustness for MultiKrum.
We address this theoretical gap by proposing the first results on the robustness coefficient of MultiKrum, thereby proving its robustness.

\subsection{Contributions}
Our main contributions can be summarized as follows:
\begin{itemize}
  \item we propose the notion of \emph{robustness coefficient}  $\kappa^\star$ of an aggregation rule, that quantifies its robustness by a well-defined optimization problem.
  \item we provide improved upper and lower bounds on the robustness coefficient of the popular MultiKrum and Krum aggregation rules.
  In particular, this constitutes the first proof that MultiKrum is robust.
\end{itemize}
Along the way, we also improve the robustness coefficients of Krum, and highlight that, contrary to prior belief, prior thought-to-be ``tight'' robustness analysis are in fact loose.

The rest of the paper is structured as follows.
\cref{sec:recalls} recalls the MultiKrum estimator and defines the robustness coefficient.
\cref{sec:robustnessmkrum} provides the main results on MultiKrum's robustness coefficient.
\cref{sec:lemmas} provides the proofs of the main results.
\cref{sec:exps} provides a numerical investigation on the bounds of MultiKrum's robustness coefficient.
Finally, we conclude with \cref{sec:conclusion}.

\subsection{Background \& Related Work} %
\label{sec:related-works}

\paragraph{Attacks.}
An attack against AI models boils down as a manipulation of model update in its parameter space,
may it be through gradient attacks -- as in Distributed Learning setting \citep{blanchardMachineLearningAdversaries2017b}, or data poisoning  -- which in turn induces a malicious gradient update \citep{bouaziz2024inverting}.
While some attacks rely on intuitively comprehensible directions and can already be effective \citep{bouaziz2024inverting}, the extremely high dimensionality and nonlinearity of neural networks makes it difficult to identify malicious model updates, thereby offering a vast space of potential attack strategies \citep{mhamdiHiddenVulnerabilityDistributed2018,baruchLittleEnoughCircumventing2019,shejwalkar2021manipulating,xieFallEmpiresBreaking2020}.

\paragraph{Defenses / aggregation rules.}
Krum \citep{blanchardMachineLearningAdversaries2017b} provided the first aggregation rule with resilience guarantees.
Krum is popular: it appears in a number of works \citep{DBLP:journals/corr/abs-2502-06917,krumplus,allouah2023robust,zhang-etal-2022-dim, colosimo2023median, garcia2025improving, cambus2025distributed, el2020robust, rouault2021}.
Other defenses include Geometric Median \citep{rousseeuw1985multivariate}, Coordinate Wise Median, Coordinate Wise Trimmed Mean.
The initial Krum paper mentions MultiKrum as an extension, but does not provide a theoretical analysis of it.

Since the publication Krum, several papers suggest a preference among practitioners for MultiKrum over Krum \citep{saha2025resilient, huo2024dp, tripathi2025spectralkrum, zhang2022dim, colosimo2023median, GarciaMarquez2025AlphaFByzantine, taheri2023robust, zhao2022fedinv, damaskinos2019aggregathor, li2019blockchain, rouault2021}.
More specifically, empirical comparisons of MultiKrum versus Krum included in
Fig. 12 in \citet{GarciaMarquez2025AlphaFByzantine},
Table I in \citet{kritharakis2025fedgreed},
Fig. 2 and Table II in \citet{tripathi2025spectralkrum} or 
Fig. 2, 4, 5 in the original Krum paper \citep{blanchardMachineLearningAdversaries2017b} all show that MultiKrum improves over Krum,\footnote{Note that in several works such as \citet{zhao2022fedinv} (Fig. 4 and 5) Krum is not even mentioned when testing robust aggregation rules and only MultiKrum is considered instead.} both on robustness and optimization speed.

Complementarily, robustness can be improved by using so-called meta-aggregation rules, such as
Bulyan \citep{mhamdiHiddenVulnerabilityDistributed2018}, 
Median of Means \citep{lecueRobustMachineLearning2020,JMLR:v22:20-950},
Bucketing \citep{pmlr-v139-karimireddy21a}, or
Nearest Neighbor Mixing \citep{allouahFixingMixingRecipe2023}.

\paragraph{Notions of robustness.}
\citet{blanchardMachineLearningAdversaries2017b} proposed the notion of $(\alpha, f)$-Byzantine resilience on which many follow-up works such as \citet{GarciaMarquez2025AlphaFByzantine, mhamdiHiddenVulnerabilityDistributed2018, bouhataByzantineFaultTolerance, allouahFixingMixingRecipe2023, el2021collaborative} proposed improvements. In particular
\citet{allouahFixingMixingRecipe2023} later proposed $(f, \kappa)$-robustness.
They showed that $(f, \kappa)$-robustness implies $(\alpha, f)$-Byzantine resilience, and that a number of aggregation rules, including Krum and excluding MultiKrum, are $(f, \kappa)$ robust, with a ``tight'' analysis.\footnote{See the caption of Table 1 in \citet{allouahFixingMixingRecipe2023}.}
That work does not provide an analysis of MultiKrum, nor any reason to support that the proposed $\kappa$ are tight indeed.
In contrast, the present work clarifies and formalize the notion of optimal robustness coefficient, and provide upper and lower bounds on that coefficient for MultiKrum.
\citet{gaucherUnifiedBreakdownAnalysis2025} extends $(f, \kappa)$-robustness to a decentralized setting.

\paragraph{Known results on Krum's robustness.}

We summarize the results of \cite{allouahFixingMixingRecipe2023}, Prop. 3 and 6, in \cref{tab:kappa} and provide our new results following the same notion of robustness. We later (\cref{sec:robustnessmkrum}) provide our results according to the \emph{robustness coefficient}, our improved notion of robustness.

\begin{table*}[t!]
  \centering
  \def\arraystretch{1.3}
  \begin{adjustbox}{width=0.7\textwidth}
  \begin{tabular}{lccc}
    \toprule
    Aggregation  & Krum \cite{allouahFixingMixingRecipe2023} & Krum (\bf{this work}) & MultiKrum $m = n-f$ (\bf{this work}) \\ \midrule
    Upper bound  & $6\frac{n-f}{n-2f}$ & $(\sqrt{2}+1)^2 \frac{n-f}{n-2f}$ & $\frac{n-f}{n-2f} \left(\frac{\sqrt{n-2f}}{\sqrt{n-f}} + \frac{\sqrt{2f}}{\sqrt{n-f}} + \frac{f}{n-f}\right)^2$ \\
    Lower bound & $\frac{f}{n-2f}$ & $\begin{cases}
        \frac{n-2}{n-2f+2} & \text{ if $n$ is even} \\
        \frac{n-1}{n-2f+1} & \text{ if $n$ is odd}
    \end{cases}
    $ & $4\frac{f}{n-2f}$ for $n > 3f$ \\
    \bottomrule
  \end{tabular}
  \end{adjustbox}
  \caption{
    Lower and upper bounds for the robustness coefficient $\kappa^\star$ of Krum and MultiKrum. Note that \cite{allouahFixingMixingRecipe2023} did not compute $\kappa^\star$, but as we explain in the analysis, if an aggregator is $(f, \kappa)$-robust, then $\kappa$ is an upper bound on its robustness coefficient.
    \label{tab:kappa}
    \vspace{-3ex}
  }
\end{table*}

%% file: Body/preliminaries.tex
In this Section, we first recall the MultiKrum aggregation rule, and then we present our notion of robustness.

\paragraph{Notations}
We let $[p] = \{1, \dots, p\}$ for $p\in\bbN$, and $\parts_m^n$ denote the set of all the subsets of $[n]$ with size $m$.
We adopt the convention $0/0 = -\infty$.
We let $\xcol = \{ x_1, \dots, x_n\}$ denote the set of $n$ vectors to be aggregated, and let $f$ denote the number of Byzantine vectors.

\subsection{MultiKrum}

We let $\nhd : \bbR^d \to \parts_{n-f}^{n}$ such that $\nhd(x)$ denotes the indices of the $n-f$ points in $\xcol$ closet to $x$, with ties arbitrarily broken.
We also let $\nhd_i = \nhd(x_i)$ for $i \in [n]$.
Given $\xcol$, we let $s^\xcol:\bbR^n \to \bbR$ such that $s^\xcol(x) = (n-f)^{-1} \sum_{i \in \nhd(x)} \| x - x_i \|^2$; we write $s$ for $s^\xcol$ when there is no ambiguity.

\begin{definition}
The \emph{$m$-MultiKrum} aggregation rule defines as the average of the $m$ inputs with smallest score: 
\[F_m(x_1, \dots, x_n) \eqdef \frac{1}{m} \sum_{i \in \Sms} x_{i},\]
where $\Sms$ collects $m$ indices of $[n]$ that have smallest scores:
\[
    s(x_i) \le s(x_j) \quad \text{for all } i\in \Sms, j \in [n]\setminus \Sms.
\]
Krum is $m$-MultiKrum with $m=1$.
\end{definition}

\begin{remark}
Here, we choose that $\nhd(x_i)$ includes the point $x_i$, with normalizing constant $(n-f)^{-1}$. While this may appear different from the original definition \cite{blanchardMachineLearningAdversaries2017b}, which excludes the point $x_i$ and uses normalizing constant $(n-f-1)^{-1}$, one readily checks that the objects are the same and that the coefficient $(n-f)^{-1}$ improves the final bound slightly.
\end{remark}

\subsection{Robustness of aggregators}

We now define our notion of robustness.

\begin{definition}
  Let $f \le n/2$ and consider an aggregation rule $F:(\bbR^d)^n \to \bbR^d$.
  We define the \emph{robustness coefficient} $\kappa^\star \in \bbR\cup\{+\infty\}$ of $F$ as
  \begin{equation*}
      \kappa^\star(F) \eqdef \sup_{\substack{x_1,\dots, x_n\in\bbR^d \\ S \in \parts_{n-f}^{n}}} \frac{\| F(x_{1}, \dots, x_{n}) - \bar{x}_{S} \|^2}{\frac{1}{|S|}\sum_{i\in S} \| x_{i} - \bar{x}_S \|^2},
  \end{equation*}
  with $\bar{x}_S = \frac{1}{|S|}\sum_{i\in S} x_{i}$.
  We say that $F$ is \emph{$f$-robust} when $\kappa^\star < \infty$.
  Finally, we call any real $c$ such that $c \ge \kappa^\star$, resp. $c \le \kappa^\star$, an \emph{upper-bound}, resp. \emph{lower-bound}, on the robustness coefficient.
\end{definition}

Several remarks are in order:
\begin{itemize}
    \item By adopting the convention $0/0 = -\infty$, we exclude the degenerate case where all points in $S$ are equal.
    This case is of little interest: if $n-f$ points agree among the $n$ candidates, they form a majority, and any reasonable aggregation rule should output the common value.
    
    \item We recover the notion of \cite{allouahFixingMixingRecipe2023} by declaring $F$ to be \emph{$(f,\kappa)$-robust} whenever $\kappa \ge \kappa^\star$.
    As a result, the convergence guarantee for Distributed Stochastic Gradient Descent \citep[Th.~1]{allouahFixingMixingRecipe2023} holds with the optimal robustness coefficient $\kappa^\star$.
    
    \item Finally, computing the robustness coefficient of an aggregation rule amounts to deriving matching lower and upper bounds.
    Lower bounds follow from evaluating the objective at any choice of $x_1,\dots,x_n$ and $S$.
    Upper bounds require constructing uniform bounds on the objective, valid for all $x_1,\dots,x_n$ and $S$, and depending on the aggregation rule.
    In particular, if an aggregator is $(f,\kappa)$-robust, then $\kappa$ is an upper bound on its robustness coefficient.
\end{itemize}

Similarly to many observations either in distributed machine learning \citep[Prop. 6]{allouahFixingMixingRecipe2023} or in various different forms in the older literature on robust mean estimation \citep{rousseeuw1985multivariate, donoho1983notion, huber2011robust}: all robustness coefficients are above a universal value.
This also holds for our optimal robustness coefficient $\kappa^\star$.

\begin{proposition}[Universal lower bound]\label{prop:univlb}
Consider a robust aggregation rule $F$ \ie{} a rule such that $\kappa^\star < \infty$.
Then, $n-2f > 0$, and there holds
\(
    \kappa^\star \ge \frac{f}{n-2f}.
\)
\end{proposition}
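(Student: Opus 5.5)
The argument is by indistinguishability: build one input collection $x_1,\dots,x_n$ that is consistent with two different honest sets $S_1,S_2\in\parts_{n-f}^n$ whose means are far apart. Since $F$ outputs a single vector, it must be far from at least one of the two honest means. Both claims, $n-2f>0$ and $\kappa^\star\ge \frac{f}{n-2f}$, follow from one construction, and $F$ is only evaluated at that one input.

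\emph{Step 1: the case $n\le 2f$.} Take $d=1$. Since $f\le n/2$, we then have $n=2f$, so $|S|=f$. Split $[n]$ into two disjoint sets $S_1,S_2$ of size $f$. Put $x_i=0$ on $S_1$ and $x_i=1$ on $S_2$, and add a small perturbation so that neither set has zero variance. Then $S_1,S_2$ are both admissible honest sets with disjoint supports. Now let the common value on $S_2$ be $t\to\infty$ while keeping the spread inside each group fixed. Then $\max_j\|F(x)-\bar x_{S_j}\|\ge t/2$, whereas the variances stay bounded, so the ratio is unbounded and $\kappa^\star=\infty$. This contradicts robustness, hence $n-2f>0$.

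\emph{Step 2: lower bound when $n>2f$.} Partition $[n]$ into $A$ with $|A|=f$, $B$ with $|B|=n-2f$, and $C$ with $|C|=f$. Set $x_i=0$ on $A\cup B$ and $x_i=1$ on $C$ (again $d=1$). Take $S_1=A\cup B$ and $S_2=B\cup C$, both of size $n-f$. Their means are $\bar x_{S_1}=0$ and $\bar x_{S_2}=\frac{f}{n-f}$. The variance of $S_1$ is $0$, which is the degenerate case, so I replace the points by a symmetric configuration. Put $x_i=-1$ on $A$, $x_i=0$ on $B$, and $x_i=+1$ on $C$. Then
\[
\bar x_{S_1}=-\tfrac{f}{n-f},\qquad \bar x_{S_2}=\tfrac{f}{n-f},
\]
and both sets have variance
\[
\tfrac{f}{n-f}-\tfrac{f^2}{(n-f)^2}=\tfrac{f(n-2f)}{(n-f)^2}.
\]
Whatever $y=F(x)$ is, $\max_j |y-\bar x_{S_j}|^2\ge \bigl(\tfrac{f}{n-f}\bigr)^2$. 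Dividing by the common variance gives
\[
\kappa^\star\ge \frac{f^2/(n-f)^2}{f(n-2f)/(n-f)^2}=\frac{f}{n-2f}.
\]
If $f=0$ the bound is trivial, since $\kappa^\star\ge 0$ is witnessed by any nondegenerate $S$.

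\emph{Main obstacle.} The only delicate step is choosing the configuration so that the two candidate honest sets have equal, nonzero variance and symmetric means; this is what the $\pm1$ placement does. A naive $0/1$ configuration instead creates the degenerate $0/0$ issue. Step 1 should be checked for the same degeneracy, and the added spread handles it.
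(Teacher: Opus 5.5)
The paper states this proposition without proof (it only points to \citep[Prop.~6]{allouahFixingMixingRecipe2023}), so your argument has to stand on its own. Step~2 does. The midpoint argument on the configuration $-e$ ($f$ copies), $0$ ($n-2f$ copies), $e$ ($f$ copies), with honest sets $A\cup B$ and $B\cup C$, is correct, and your mean and variance computations check out. You cannot choose $d=1$, since $F$ is given on $(\bbR^d)^n$. Nothing changes if you place the points on $\bbR e_1$, because $\max_j\|y-\bar x_{S_j}\|\ge\frac12\|\bar x_{S_1}-\bar x_{S_2}\|$ for every $y\in\bbR^d$.

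This is exactly the configuration in the paper's proof of \cref{theo:lowerbound-kappanf}, and the comparison is instructive. There the paper knows that $(n-f)$-MultiKrum lands at $\bar x_{B\cup C}$ (up to the tie-breaking $\varepsilon$). It therefore charges the whole gap $\frac{2f}{n-f}$ against $S=A\cup B$ and gets $4\frac{f}{n-2f}$. For an arbitrary $F$ only half the gap is guaranteed, which is why the universal bound is exactly a quarter of that.

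Your reason for discarding the $0/1$ configuration is not right. If $F(x)\neq 0$, the ratio for $A\cup B$ is $c/0=+\infty$ with $c>0$. The $0/0=-\infty$ case arises only when $F(x)=0$, and then $B\cup C$ gives exactly $\frac{f}{n-2f}$. That route is equally valid; what your symmetric choice buys is independence from the reading $c/0=+\infty$.

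That independence cannot be kept everywhere, and this is the one genuine hole. In Step~1 with $f=1$, i.e.\ $n=2$, the sets $S_1,S_2$ are singletons. No perturbation gives them positive variance, so your unboundedness argument has no nonzero denominator to work with. In that case every admissible $S$ has $\Sigma_S=0$, and $\kappa^\star=+\infty$ follows only from the convention $c/0=+\infty$ for $c>0$: take $x_1\neq x_2$, and $F(x)$ must differ from one of them.

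This convention is implicit in the paper's definition. If $c/0$ were also sent to $-\infty$, every $F$ would have $\kappa^\star=-\infty$ when $n=2$, $f=1$, and the claim $n-2f>0$ would be false. So state the convention explicitly. Then either treat $f=1$ separately, or run all of Step~1 with two zero-variance sets placed at distinct points.
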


%% file: Body/mainresults.tex
In this Section, we present our main results on the robustness of the $m$-MultiKrum aggregation rule.
Throughout, we denote $\kappa^\star_m$ the robustness coefficient of $m$-MultiKrum.

\subsection{The upper bound}
The following result shows that $m$-MultiKrum is a robust aggregation rule by providing an upper bound on its robustness coefficient.
The upper bound on MultiKrum's robustness coefficient $\kappa^\star_m$ is constant for small values of $m$, and then decreasing.

\begin{theorem}\label{theo:boundKrumfinal}
  \introc{}
Then, $m$-MultiKrum is robust, and its robustness coefficient $\kappa^\star_m$ has the following upper-bound:
\begin{align}\label{eq:kappa-minmf-final}
  \kappa^\star_m \ge \frac{n-f}{n-2f}   \min\left(
  (\sqrt{2} + 1)^2, \kdecb(m)
\right)
\end{align}
with
\(\kdecb(m) = \left(\frac{\sqrt{n-2f}}{\sqrt{m}} + \frac{\sqrt{2f}}{\sqrt{m}} + \frac{f}{m}\right)^2\).
\end{theorem}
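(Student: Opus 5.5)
Throughout I read \eqref{eq:kappa-minmf-final} as the upper bound $\kappa^\star_m \le \frac{n-f}{n-2f}\min\big((\sqrt2+1)^2,\kdecb(m)\big)$. The displayed ``$\ge$'' must be a typo, since the theorem calls it an upper bound. Since $\kappa^\star_m$ is a supremum, it suffices to fix an arbitrary collection $\xcol$ and an honest set $S$ with $|S|=n-f$, and write
\[
\sigma^2 = \frac{1}{n-f}\sum_{i\in S}\|x_i-\bar x_S\|^2 .
\]
I will then bound $\|F_m(\xcol)-\bar x_S\|$ by each of $\sqrt{\frac{n-f}{n-2f}}(\sqrt2+1)\sigma$ and $\sqrt{\frac{n-f}{n-2f}\kdecb(m)}\,\sigma$ separately. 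Two elementary facts do most of the work.

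\textbf{Fact (a): honest scores are small.} For $i\in S$, $\nhd_i$ consists of the $n-f$ points closest to $x_i$. Hence
\[
s(x_i)\le \frac{1}{n-f}\sum_{j\in S}\|x_i-x_j\|^2 = \|x_i-\bar x_S\|^2+\sigma^2 ,
\]
and summing over $S$ gives $\sum_{i\in S}s(x_i)\le 2(n-f)\sigma^2$.

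\textbf{Fact (b): any small score forces proximity to $\bar x_S$.} For any index $k$, the set $T=\nhd_k\cap S$ has $|T|\ge n-2f$. By Jensen,
\[
s(x_k)\ge \frac{|T|}{n-f}\|x_k-\bar x_T\|^2 \ge \frac{n-2f}{n-f}\|x_k-\bar x_T\|^2 .
\]
The standard subset-mean bound (from the recalls in \cref{sec:app_recalls}) gives $\|\bar x_T-\bar x_S\|^2\le \frac{|S|-|T|}{|T|}\sigma^2\le\frac{f}{n-2f}\sigma^2$. By the triangle inequality,
\[
\|x_k-\bar x_S\|\le \sqrt{\tfrac{n-f}{n-2f}}\Big(\sqrt{s(x_k)}+\sqrt{\tfrac{f}{n-f}}\,\sigma\Big).
\]

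\textbf{Constant bound.} Bound $\|F_m-\bar x_S\|\le \frac1m\sum_{k\in\Sms}\|x_k-\bar x_S\|$ and apply (b) to every selected point. Cauchy–Schwarz gives $\sum_{k\in\Sms}\sqrt{s(x_k)}\le\sqrt{m\sum_{k\in\Sms}s(x_k)}$. Since $m\le n-f$, the $m$ smallest scores overall sum to at most the $m$ smallest honest scores, and hence to at most $m$ times the average honest score, which is $\le 2m\sigma^2$ by (a). This yields $\sqrt{\frac{n-f}{n-2f}}\,(\sqrt2+\sqrt{f/(n-f)})\,\sigma$, and $f\le n-f$ gives the constant $(\sqrt2+1)^2$. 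This argument also recovers the Krum bound for $m=1$.

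\textbf{Decreasing bound.} Split $\Sms=H\cup B$ with $H=\Sms\cap S$ and $B=\Sms\setminus S$, so $|B|\le f$. Then
\[
m(F_m-\bar x_S)=\sum_{i\in H}(x_i-\bar x_S)+\sum_{k\in B}(x_k-\bar x_S).
\]
For the honest part, a centered-sum inequality gives $\|\sum_{H}(x_i-\bar x_S)\|^2\le |H|(n-f-|H|)\sigma^2\le m(n-f)\sigma^2$. After dividing by $m$ this becomes $\sqrt{\frac{n-f}{n-2f}}\sqrt{\frac{n-2f}{m}}\,\sigma$.

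For the Byzantine part, use (b) and Cauchy–Schwarz: $\sum_{B}\sqrt{s(x_k)}\le\sqrt{|B|\sum_B s(x_k)}$. Because $|B|\le f$ and at least $|B|$ honest points are unselected, the Byzantine scores can be matched injectively to larger honest scores.

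\textbf{Main obstacle.} The delicate step is getting exactly the coefficient $\sqrt{2f}/\sqrt m$ rather than $\sqrt{2f(n-f)}/m$. The crude bound $\sum_B s\le\sum_S s\le 2(n-f)\sigma^2$ only gives the latter. To sharpen it I would first try bounding $\sum_B s(x_k)$ by the scores of the unselected honest points, i.e.\ the $(n-f)$-sum minus the smallest ones. Alternatively, I would treat $H$ and $B$ jointly, applying Cauchy–Schwarz to the full selected sum and optimizing over $|B|$, so that the $\sqrt{m}$ denominators come out.

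The remaining term $\frac{|B|}{m}\sqrt{\frac{f}{n-2f}}\sigma\le\sqrt{\frac{n-f}{n-2f}}\frac fm\sigma$ uses $f\le n-f$. Squaring and dividing by $\sigma^2$ then gives $\frac{n-f}{n-2f}\kdecb(m)$. Taking the minimum of the two bounds and the supremum over $(\xcol,S)$ concludes, and in particular shows $\kappa^\star_m<\infty$.
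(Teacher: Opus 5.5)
Your constant bound is correct. In fact your Fact (b) is sharper than the paper's \cref{lmm:control-xi-xS}, because it replaces $\Sigma_S$ by $\frac{f}{n-f}\Sigma_S$ in the second term, so it even yields $(\sqrt2+\sqrt{f/(n-f)})^2$ in place of $(\sqrt2+1)^2$. Your honest/Byzantine split for the decreasing bound is also the paper's decomposition. The genuine gap is the step you flag as the ``main obstacle'': as written, the decreasing bound is not proved. You never bound $\sum_{k\in B}s(x_k)$ at the required size, and only list strategies you would try. Your diagnosis is also off: the comparison that matters is not with $\sum_{i\in S}s(x_i)$ but with the sum over the selected set $S_m^\star$.

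You already proved the missing observation in your constant-bound paragraph. Scores are nonnegative and $B\subseteq S_m^\star$, so $\sum_{k\in B}s(x_k)\le\sum_{k\in S_m^\star}s(x_k)\le 2m\sigma^2$. Cauchy--Schwarz then gives $\frac1m\sum_{k\in B}\sqrt{s(x_k)}\le\frac1m\sqrt{2m|B|}\,\sigma\le\sqrt{2f/m}\,\sigma$. Together with the prefactor $\sqrt{(n-f)/(n-2f)}$ from Fact (b), this is exactly the middle term $\sqrt{2f}/\sqrt m$ in $\kdecb(m)$. This is precisely what the paper does: it bounds $\sum_{i\in S_m^\star\setminus S}s(x_i)\le\sum_{i\in S_m^\star}s(x_i)\le 2m\Sigma_S$ via \cref{lmm:mkra2}.

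Your matching idea would also work if you carried it through. Every $k\in B$ has score at most the average score over the $n-f-m+|B|$ unselected honest indices. Hence $\sum_{k\in B}s(x_k)\le\frac{2|B|(n-f)}{n-f-m+|B|}\sigma^2\le 2m\sigma^2$, where the last step is equivalent to $(|B|-m)(n-f-m)\le 0$.

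With either fix, the rest of your assembly goes through. Your triangle inequality plays the role of the paper's Young step with optimized $\alpha_1$. Using $|B|\le f$ for every $m$ gives $\kdecb(m)$ directly, without the paper's separate comparison for $m\le f$.

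One minor point: the subset-mean bound $\|\bar x_T-\bar x_S\|^2\le\frac{|S|-|T|}{|T|}\Sigma_S$ is not among the paper's recalls, which contain only Young's and Jensen's inequalities. You should prove it, e.g.\ from the within/between variance decomposition of $S$ into $T$ and $S\setminus T$.
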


The proof of \cref{theo:boundKrumfinal} combines the two upper bounds derived in \cref{theo:boundMultikrumconstant,theo:boundMultikrumdecreasing} by taking the minimum of the two.
The next results characterizes the behavior of the transition point $m^\dagger$ separating, the two regimes of \cref{theo:boundKrumfinal}.

\begin{theorem}[Bounds on the transition point]
\label{prop:mstar}
Let $m^\dagger=m^\dagger(n,f)>0$ satisfy $\kdecb(m^\dagger)=(\sqrt2+1)^2$.
There holds
\[
m^\dagger(n, f) \underset{f/n \to 0}{=} \frac{n}{(\sqrt2+1)^2}\,(1+o(1))
\]
\end{theorem}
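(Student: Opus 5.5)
Set $c=\sqrt2+1$ and rewrite the defining equation. Since $\kdecb(m)\ge 0$ is a square of a positive quantity, $\kdecb(m^\dagger)=c^2$ is equivalent to
\[
\frac{\sqrt{n-2f}+\sqrt{2f}}{\sqrt{m}}+\frac{f}{m}=c .
\]
With $t=1/\sqrt{m}>0$, this is the quadratic $f t^2+(\sqrt{n-2f}+\sqrt{2f})\,t-c=0$. Write $a=\sqrt{n-2f}+\sqrt{2f}$. The left-hand side is strictly increasing in $t>0$, equals $-c<0$ at $t=0$, and tends to $+\infty$. Hence there is a unique positive root, which justifies the definition of $m^\dagger$ (it is in general a real number, not necessarily an integer). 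For $f>0$ the root is
\[
t^\dagger=\frac{-a+\sqrt{a^2+4cf}}{2f}=\frac{2c}{a+\sqrt{a^2+4cf}} ,
\]
and the second form also covers $f=0$, where it gives $t^\dagger=c/a$. Therefore
\[
m^\dagger=(t^\dagger)^{-2}=\frac{\bigl(a+\sqrt{a^2+4cf}\bigr)^2}{4c^2}.
\]

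Next, factor out $\sqrt n$. Put $\varepsilon=f/n$. Then $a=\sqrt n\,\bigl(\sqrt{1-2\varepsilon}+\sqrt{2\varepsilon}\bigr)=\sqrt n\,(1+O(\sqrt\varepsilon))$. Also
\[
a^2+4cf=n\Bigl(\bigl(\sqrt{1-2\varepsilon}+\sqrt{2\varepsilon}\bigr)^2+4c\varepsilon\Bigr)=n\,(1+O(\sqrt\varepsilon)).
\]
It follows that $a+\sqrt{a^2+4cf}=2\sqrt n\,(1+O(\sqrt\varepsilon))$. Substituting gives
\[
m^\dagger=\frac{4n\,(1+O(\sqrt{\varepsilon}))}{4c^2}=\frac{n}{(\sqrt2+1)^2}\,(1+o(1)) \quad\text{as } f/n\to 0 .
\]
As a bonus, this gives the explicit rate $O(\sqrt{f/n})$ for the error term.

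The only delicate point is interpreting the regime "$f/n\to0$". The expansion above depends only on $\varepsilon$, with $n$ entering as an exact factor, so the result holds uniformly however $n$ and $f$ vary. One might optionally add monotonicity or sandwich bounds for finite $n,f$. Using $\sqrt{a^2+4cf}\ge a$ and $a\le\sqrt{2n}$ (by concavity, $\sqrt{n-2f}+\sqrt{2f}\le\sqrt{2n}$) yields
\[
\frac{a^2}{c^2}\le m^\dagger\le \frac{\bigl(\sqrt{2n}+\sqrt{2n+4cf}\bigr)^2}{4c^2}.
\]
These bounds are not needed for the asymptotic statement itself.
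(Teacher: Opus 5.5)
Your proof is correct, and it takes a different route from the paper's. In fact it repairs a flaw in the paper's argument.

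\textbf{The paper's route.} It sets $x=\sqrt{m^\dagger}$, $C=(\sqrt2+1)^2$, $A=\sqrt{n-2f}+\sqrt{2f}$, and reduces the defining equation to the quartic $Cx^4=A^2x^2+2Afx+f^2$. It then sandwiches $m^\dagger$ without solving. Dropping the cross term gives $m^\dagger\ge \bigl(A^2+\sqrt{A^4+4Cf^2}\bigr)/(2C)$. The AM--GM bound $2Afx\le A^2x^2+f^2$ gives $m^\dagger\le\bigl(A^2+\sqrt{A^4+2Cf^2}\bigr)/C$. Both sides are then claimed to be $\frac{n}{C}(1+o(1))$.

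\textbf{The flaw.} That claim is wrong for the upper bound. As $f/n\to0$ it behaves like $2A^2/C\sim 2n/(\sqrt2+1)^2$, because the unweighted AM--GM doubles the leading coefficient. So the paper's sandwich only places $m^\dagger$ between roughly $n/C$ and $2n/C$. Closing it would need a weighted bound $2Afx\le\delta A^2x^2+f^2/\delta$ with $\delta\to0$ and $f^2/\delta=o(n^2)$, for example $\delta=f/n$.

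\textbf{Your route.} You observe that the quartic is the perfect square $Cx^4=(Ax+f)^2$, take the positive root, and solve the resulting quadratic exactly, so nothing is lost. Your closed form also gives more than the statement asks for:
\begin{itemize}
\item the explicit rate $1+O(\sqrt{f/n})$, which is attained, since a second-order expansion gives $m^\dagger=\frac{n}{(\sqrt2+1)^2}\bigl(1+2\sqrt{2f/n}+O(f/n)\bigr)$;
\item the uniform bound $m^\dagger\ge a^2/c^2\ge n/(\sqrt2+1)^2$ for all admissible $n,f$, using $a^2=n+2\sqrt{2f(n-2f)}\ge n$;
\item a proof that $m^\dagger/n$ depends on $f/n$ alone, which the paper only observes numerically in its experiments.
\end{itemize}
Your optional upper bound is loose by the same factor $2$ as the paper's. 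As you say, it plays no role in your argument.
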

When $f \ll n$, the proportion of Byzantine workers is negligible. In this regime, the transition point satisfies
$m^\dagger \approx n/(\sqrt{2}+1)^2 \simeq 0.17,n$, so that the $\kdeca(m)$ regime becomes active and $\kappa(m)$ decreases for moderately large $m$. This suggests that MultiKrum can be more beneficial than Krum when the number of Byzantine workers is not too small.
\begin{proof}[Proof of \cref{prop:mstar}]
Set \( C = (\sqrt{2} + 1)^2 \) and 
\[
A = A(n,f) \coloneqq \sqrt{n - 2f} + \sqrt{2f}.
\]
By definition of \( m^\dagger \), the equation \( \kdecb(m^\dagger) = C \) is equivalent, with the substitution \( x = \sqrt{m^\dagger} > 0 \), to
\[
C x^4 = \left( A + \frac{f}{x} \right)^2 = A^2 + 2 A \frac{f}{x} + \frac{f^2}{x^2}.
\]
Multiplying both sides by \( x^2 \) yields
\[
C x^4 = A^2 x^2 + 2 A f x + f^2.
\]

First, we derive a \emph{lower bound} on $m^\dagger$.
Since \( 2 A f x \geq 0 \), dropping this positive term from the right-hand side decreases it, hence
$
C x^4 \geq A^2 x^2 + f^2.
$
Since $x^2 = m^\dagger$, this inequality becomes
\[
C (m^\dagger)^2 - A^2 m^\dagger - f^2 \geq 0,
\]
whose solutions satisfy
$
m^\dagger \geq \frac{A^2 + \sqrt{A^4 + 4 C f^2}}{2 C}.
$
Therefore,
\[
m^\dagger = x^2 \geq \frac{A^2 + \sqrt{A^4 + 4 (\sqrt{2} + 1)^2 f^2}}{2 (\sqrt{2} + 1)^2}.
\]

Second, we derive an \emph{upper bound} on $m^\dagger$.
By the arithmetic and geometric means inequality,
$
2 A f x \leq A^2 x^2 + f^2,
$
which implies
$
A^2 x^2 + 2 A f x + f^2 \leq 2 A^2 x^2 + 2 f^2.
$
Substituting into the defining equation yields
$
C x^4 \leq 2 A^2 x^2 + 2 f^2.
$
Again,   we have
\[
C (m^\dagger)^2 - 2 A^2 m^\dagger - 2 f^2 \leq 0,
\]
which implies
$
m^\dagger \leq \frac{2 A^2 + \sqrt{4 A^4 + 8 C f^2}}{2 C} = \frac{A^2 + \sqrt{A^4 + 2 C f^2}}{C}.
$
Hence,
\[
m^\dagger = x^2 \leq \frac{A^2 + \sqrt{A^4 + 2 (\sqrt{2} + 1)^2 f^2}}{(\sqrt{2} + 1)^2}.
\]

Finally, we derive the \emph{Asymptotic behavior when \( f \ll n \).}
Noting that for \( f \ll n \), we have
\[
A = \sqrt{n - 2f} + \sqrt{2f} = \sqrt{n} (1 + o(1)),
\]
so that
$
A^2 = n (1 + o(1)),
$
and
$
\sqrt{A^4 + 4 C f^2} = A^2 (1 + o(1)) = n (1 + o(1)),
$
and similarly for the term inside the upper bound. Thus, both bounds reduce to
\[
m^\dagger = \frac{n}{(\sqrt{2} + 1)^2} (1 + o(1)),
\]
which completes the proof.
\end{proof}

\subsection{Two lower bounds}

Let us emphasize that \cref{theo:boundKrumfinal} provides an upper bound on the robustness coefficient $\kappa_m$, rather than the coefficient itself.
In this Section, we complement this result by deriving lower bounds on Krum and $m$-MultiKrum with $m=n-f$.
This later setting yields the smallest upper   bound among all $m$-MultiKrum aggregators (see \eqref{eq:kappa-minmf-final}), and is therefore the strongest candidate for optimal robustness.

We present below a lower bound on the robustness coefficient $\kappa_1^\star$ of Krum.
This bound provides a complementary behavior to the universal bound $\frac{f}{n-2f}$ (\cref{prop:univlb}).
\begin{theorem}[Krum's lower bound]\label{th:lowerbound-krum}
    There holds
    \[
    \kappa^\star_1 \ge
    \begin{cases}
        \frac{n-2}{n-2f+2} & \text{ if $n$ is even} \\
        \frac{n-1}{n-2f+1} & \text{ if $n$ is odd}.
    \end{cases}
    \]
\end{theorem}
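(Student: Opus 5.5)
The plan is to exhibit, for each parity of $n$, an explicit one-dimensional configuration $x_1,\dots,x_n\in\bbR$ and an honest set $S\in\parts_{n-f}^n$, and to evaluate the ratio in the definition of $\kappa^\star$. Since any such evaluation is a lower bound on $\kappa^\star_1$, this suffices. The configuration uses only two locations, $0$ and $1$: the honest points are split between them, and all Byzantine points sit at $1$, enlarging the cluster there so that Krum outputs $1$.

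\textbf{Construction.} Write $N = n-f$ and let $p$ be the number of points at $0$ (all honest), with $p = n/2-1$ if $n$ is even and $p=(n-1)/2$ if $n$ is odd. The remaining $n-p$ points sit at $1$: the $f$ Byzantine points and $q = N - p$ honest points. Thus $q = n/2-f+1$ for even $n$ and $q=(n+1)/2-f$ for odd $n$. We need $q\ge 1$, which holds under $n>2f$.

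\textbf{Krum's choice.} Compute the scores with $\nhd(x)$ of size $N$, including the point itself.
\begin{itemize}
\item For a point at $1$, the $n-p$ points at $1$ contribute zero. Only $\max(0, N-(n-p))$ points at $0$ enter, each at distance $1$.
\item For a point at $0$, the $p$ zeros contribute nothing and $N-p = q$ points at $1$ enter.
\end{itemize}
Hence $(n-f)\,s(1) = \max(0,p-f)$ and $(n-f)\,s(0) = q$. Comparing, $q - (p-f) = N-2p+f = n-2p$, which equals $2$ (even $n$) or $1$ (odd $n$). In both cases this is positive, so $s(1)<s(0)$ strictly. Therefore Krum outputs $F_1 = 1$ with no tie-breaking issue.

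\textbf{Ratio.} The honest mean is $\bar x_S = q/N$, so
\[
\|F_1-\bar x_S\|^2 = \left(\frac{p}{N}\right)^2.
\]
The honest variance of this two-point distribution is $pq/N^2$. The ratio is therefore $p/q$:
\begin{itemize}
\item for even $n$, $p/q = \frac{n/2-1}{n/2-f+1} = \frac{n-2}{n-2f+2}$;
\item for odd $n$, $p/q = \frac{(n-1)/2}{(n+1)/2-f} = \frac{n-1}{n-2f+1}$.
\end{itemize}
These are exactly the claimed bounds.

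\textbf{Main difficulty.} The only delicate point is the score comparison. The parity-dependent choice of $p$ is precisely the largest zero-cluster for which the cluster at $1$ still strictly wins. I would check the degenerate small cases separately: when $p=0$ (i.e.\ $n=2$) the bound is $0$ and trivial, and when $p\le f$ the $\max$ equals $0$, which only helps. The convention $0/0=-\infty$ plays no role, since $p,q\ge1$ makes the variance nonzero.
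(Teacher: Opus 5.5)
Your proposal is correct and coincides with the paper's proof up to swapping the two locations $0$ and $e$. The paper likewise puts $n/2-1$ (resp.\ $(n-1)/2$) honest points in one cluster and all other points, including the $f$ Byzantine ones, in the cluster that Krum selects, so the ratio reduces to your $p/q$.

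Your identity $q-(p-f)=n-2p\in\{1,2\}$ makes the strict score comparison transparent. The only slip is that $p=0$ also occurs for $n=1$, not just $n=2$. For $n=1$ one has $\Sigma_S\equiv 0$, so the claimed bound $0$ actually fails under the $0/0=-\infty$ convention; this degenerate case is ignored by the paper as well.
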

\begin{proof}[Proof of \cref{th:lowerbound-krum}]
Let $e = e_1 \in \mathbb{R}^d$ denote the first canonical basis vector.
Consider $n$ even, and define $\xcol = (x_i)_{i=1}^n$ such that $x_1 = \dots = x_{n/2-1} = e$, and $x_{n/2} = \dots = x_n = 0$.
Besides, let $S = \{1, \dots, n-f\}$.
Then, one readily checks that \(\bar{x}_S = \frac{n - 2}{2(n-f)}\), and \(\Sigma_S = \frac{(n-2)(n-2f+2)}{4(n-f)^2}\).
Besides, the score function expresses as $s(0) = \frac{n/2 - f - 1}{n-f}$, and $s(e) = \frac{n/2 - f + 1}{n-f}$, so that the Krum aggregation is $x_1^\star = 0$.
Thus, we obtain after simplifications
\[
\kappa_1^\star \ge \frac{\| x^\star_1 - \bar{x}_S\|^2}{\Sigma_S} = \frac{n - 2}{n - 2f + 2}.
\]
Consider now $n$ odd, and take this time $\xcol = (x_i)_{i=1}^n$ such that $x_1 = \dots = x_{(n-1)/2} = e$, and $x_{(n+1)/2} = \dots = x_n = 0$.
With again $S = \{1, \dots, n-f\}$, we get \(\bar{x}_S = \frac{n-1}{2(n-f)}\), and \(\Sigma_S = \frac{(n-1)(n-2f+1)}{4(n-f)^2}\).
The score function expresses as \(s(0) = \frac{n-2f-1}{2(n-f)} \), and \(s(1) = \frac{n-2f+1}{2(n-f)} \), so that the Krum aggregation is $x_1^\star = 0$.
Thus, we obtain after simplifications
\[
\kappa_1^\star \ge \frac{\| x^\star_1 - \bar{x}_S\|^2}{\Sigma_S} = \frac{n - 1}{n - 2f + 1}\qedhere
\]
\end{proof}

\begin{theorem}[$(n-f)$-MultiKrum's lower bound]
\label{theo:lowerbound-kappanf}
Let $f,n \in \mathbb{N}$ satisfy $n > 3f$.
Then, the optimal robustness coefficient $\kappa_{n-f}^\star$ of the
$(n-f)$-MultiKrum aggregator satisfies
\[
\kappa_{n-f}^\star \ge 4 \frac{f}{n - 2f}.
\]
\end{theorem}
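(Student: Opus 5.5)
We will prove the bound by exhibiting one explicit configuration, in the spirit of the proof of \cref{th:lowerbound-krum}. The configuration is chosen so that $(n-f)$-MultiKrum keeps the $f$ Byzantine points and discards the $f$ most outlying honest points. Since $\kappa^\star_{n-f}$ is a supremum, any single configuration $(\xcol,S)$ (or a family indexed by a parameter $\varepsilon\to0$) gives a lower bound.

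\textbf{Construction.} Let $e=e_1$ and fix a small $\varepsilon\in(0,1)$.
\begin{itemize}
\item Take $n-2f$ honest points at $0$ and $f$ honest points at $-e$, and let $S$ be the index set of these $n-f$ points.
\item Take the $f$ Byzantine points all at $(1-\varepsilon)e$.
\end{itemize}
For the honest set, $\bar x_S=-\frac{f}{n-f}e$ and
\[
\frac{1}{|S|}\sum_{i\in S}\|x_i-\bar x_S\|^2=\frac{f}{n-f}-\frac{f^2}{(n-f)^2}=\frac{f(n-2f)}{(n-f)^2}.
\]
This is nonzero, so the degenerate convention plays no role.

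\textbf{Scores.} We compute the three score values, with neighborhoods of size $n-f$ that include the point itself.
\begin{itemize}
\item For a point at $0$: its $n-f$ nearest points are the $n-2f$ zeros plus $f$ points at distance at most $1$. The Byzantine points at distance $1-\varepsilon$ are strictly closer than those at distance $1$, so $s(0)=\frac{f(1-\varepsilon)^2}{n-f}$.
\item For a Byzantine point: the $f$ Byzantine copies lie at distance $0$, and the $n-2f$ zeros at distance $1-\varepsilon$. The points at $-e$ are at distance $2-\varepsilon$, farther than the zeros. Hence $s((1-\varepsilon)e)=\frac{(n-2f)(1-\varepsilon)^2}{n-f}$.
\item For a point at $-e$: its neighbors are the $f$ copies at distance $0$ and the $n-2f$ zeros at distance $1$, so $s(-e)=\frac{n-2f}{n-f}$.
\end{itemize}
Since $n>3f$ gives $f<n-2f$, and since $\varepsilon>0$, we get the strict ordering
\[
s(0)<s((1-\varepsilon)e)<s(-e).
\]
The $n-f$ smallest scores are therefore exactly those of the $n-2f$ zeros and the $f$ Byzantine points. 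This selection is unambiguous, so no tie-breaking issue arises. The aggregate is
\[
F_{n-f}=\frac{f(1-\varepsilon)}{n-f}\,e .
\]

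\textbf{Conclusion and main difficulty.} The ratio in the definition of $\kappa^\star$ equals
\[
\frac{\bigl(\frac{f(2-\varepsilon)}{n-f}\bigr)^2}{\frac{f(n-2f)}{(n-f)^2}}=\frac{(2-\varepsilon)^2 f}{n-2f}.
\]
Letting $\varepsilon\to0$ and taking the supremum gives $\kappa^\star_{n-f}\ge 4\frac{f}{n-2f}$.

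The only delicate step is the score ordering. The perturbation $\varepsilon$ is needed to break the tie between the Byzantine points and the far honest cluster. The hypothesis $n>3f$ is exactly what makes the zeros outrank the Byzantine points, so that the MultiKrum selection is as claimed.
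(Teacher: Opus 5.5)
Your proposal is correct and follows the paper's proof exactly: the same three-cluster configuration ($f$ points at $-e$, $n-2f$ at $0$, $f$ at $(1-\varepsilon)e$), the same reference set $S$ made of the points at $-e$ and the zeros, and the same limit $\varepsilon\to0$ of the ratio $(2-\varepsilon)^2 f/(n-2f)$. Your score values, which carry the factor $(1-\varepsilon)^2$, are in fact slightly more careful than the paper's, which drops or mis-squares the $\varepsilon$-dependence; this does not affect the score ordering or the conclusion.
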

The condition $n > 3f$ stems from the limitations of our proof technique.
For $2f < n \le 3f$, the universal lower bound (\cref{prop:univlb})
already applies
to all aggregation rules, and our construction does not yield a stronger bound for
$(n-f)$-MultiKrum.
We therefore focus on the regime $n > 3f$, where the same argument leads to a
strictly improved lower bound specific to MultiKrum.

We provide closed-form lower bounds on $\kappa_m^\dagger$ for all $1 \le m \le n-f$ 
in \cref{sec:app_lowerbound}.
\begin{proof}[Proof of \cref{theo:lowerbound-kappanf}]
Let $e = e_1 \in \mathbb{R}^d$ be the first canonical basis vector, and consider
$\mathbf{x} = (x_i)_{i=1}^n$ defined by
\[
\begin{cases}
f \text{ vectors } &= -e,\\
n-2f \text{ vectors } &= 0,\\
f \text{ vectors } &= (1-\varepsilon)e,
\end{cases}
\]
with $\varepsilon > 0$ arbitrarily small.

For this configuration, the MultiKrum scores satisfy, for
$1 \le i \le f$, $f+1 \le j \le n-f$, and $n-f+1 \le k \le n$,
\[
s(x_i) = \frac{n-2f}{n-f},\quad
s(x_j) = \frac{f}{n-f},\quad
s(x_k) = (1-\varepsilon)\frac{n-2f}{n-f}.
\]
Since $n > 3f$, we have $s(x_j) < s(x_k) < s(x_i)$, so the $(n-f)$ smallest scores correspond to
$
S_{n-f}^\star = \{f+1,\dots,n\},
$
that is, all zero vectors and all $(1-\varepsilon)e$ vectors. Hence,
\[
\bar{x}_{S_{n-f}^\star} = (1-\varepsilon)\frac{f}{n-f}e.
\]

We choose as reference set
$S = \{1,\dots,n-f\}$,
whose empirical mean is $\bar{x}_S = -\frac{f}{n-f}e$.
The squared bias is therefore
\[
\|\bar{x}_{S_{n-f}^\star} - \bar{x}_S\|^2
= \left((2-\varepsilon)\frac{f}{n-f}\right)^2.
\]
The empirical variance of $S$ is moreover
\[
\Sigma_S
= \frac{1}{n-f}\sum_{i\in S}\|x_i-\bar{x}_S\|^2
= \frac{f(n-2f)}{(n-f)^2}.
\]
Letting $\varepsilon \to 0$ yields
$
\frac{\|\bar{x}_{S_{n-f}^\star} - \bar{x}_S\|^2}{\Sigma_S}
= 4\frac{f}{n-2f},
$
which concludes the proof.
\end{proof}

%% file: Body/prepa.tex
In this Section, we provide the proof of the upper-bound on MultiKrum's robustness coefficient, \cref{theo:boundKrumfinal}.

\subsection{Mean-variance relation}

For any set $A \subset [n]$ of size $|A|=a$, we set
$$\bar{x}_A = \frac{1}{a} \sum_{i \in A} x_i \quad \text{and} \quad \Sigma_A = \frac{1}{a} \sum_{i \in A} \| x_i - \bar{x}_A\|^2.$$

\begin{lemma}\label{lmm:toolbox}
Let $A,B \subset [n]$ have sizes $a,b$ respectively.
For any $A,B$, \label{it:sigab}
\begin{equation}\label{eq:cross-sum}
\frac{1}{ab}\sum_{i\in A}\sum_{j\in B}\|x_i-x_j\|^2
\;=\; \|\bar{x}_A-\bar{x}_B\|^2 + \Sigma_A + \Sigma_B.
\end{equation}
For $A=B$, we therefore have 
\begin{equation*}\label{eq:sigmaA-pairwise}
\Sigma_{A} \;=\; \frac{1}{2a^2}\sum_{i,j\in A}\|x_i-x_j\|^2.
\end{equation*}

\end{lemma}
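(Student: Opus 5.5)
The plan is to prove the cross-sum identity \eqref{eq:cross-sum} by a bias--variance decomposition around the two means. The case $A=B$ then follows by specialization.

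For fixed $i\in A$ and $j\in B$, I write
\[
x_i - x_j = (x_i-\bar{x}_A) + (\bar{x}_A-\bar{x}_B) + (\bar{x}_B - x_j).
\]
Expanding the squared norm gives three squared terms and three cross terms:
\[
2\langle x_i-\bar{x}_A,\bar{x}_A-\bar{x}_B\rangle,\quad
2\langle \bar{x}_A-\bar{x}_B,\bar{x}_B-x_j\rangle,\quad
2\langle x_i-\bar{x}_A,\bar{x}_B-x_j\rangle.
\]

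Next I average over $i\in A$ and $j\in B$ with weight $\frac{1}{ab}$. The key observation is that $\frac{1}{a}\sum_{i\in A}(x_i-\bar{x}_A)=0$ and $\frac{1}{b}\sum_{j\in B}(\bar{x}_B-x_j)=0$. Each cross term is linear in at least one of these centered quantities, and the third is a product of the two centered averages because the double sum factorizes. Hence all three cross terms vanish after averaging.

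The three squared terms average respectively to $\Sigma_A$, to $\|\bar{x}_A-\bar{x}_B\|^2$ (which is constant in $i,j$), and to $\Sigma_B$. This yields \eqref{eq:cross-sum}.

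For the second claim, I take $B=A$, so $b=a$, $\bar{x}_B=\bar{x}_A$ and $\Sigma_B=\Sigma_A$. The middle term vanishes and the identity reads $\frac{1}{a^2}\sum_{i,j\in A}\|x_i-x_j\|^2 = 2\Sigma_A$. Dividing by $2$ gives the stated formula.

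There is no real obstacle here. The only points needing care are the factorization of the doubly-indexed cross term, and noting that the identity holds for arbitrary (possibly overlapping) $A,B$ since no disjointness was used.
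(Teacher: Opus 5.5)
Your proposal is correct and follows essentially the same route as the paper: the three-term decomposition $x_i-x_j=(x_i-\bar{x}_A)+(\bar{x}_A-\bar{x}_B)+(\bar{x}_B-x_j)$, expansion, and cancellation of the cross terms by centering. You also spell out the specialization to $A=B$, which the paper leaves implicit.
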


\begin{proof}
Fix $i\in A,\ j\in B$, and consider the decomposition
\[
x_i-x_j=(x_i-\bar{x}_A) + (\bar{x}_A-\bar{x}_B) + (\bar{x}_B-x_j).
\]
Squaring and expanding the inner product yields
\begin{align*}
\|x_i-x_j\|^2
&= \|x_i-\bar{x}_A\|^2 + \|\bar{x}_A-\bar{x}_B\|^2 + \|\bar{x}_B-x_j\|^2 
\\
&+ 2\langle x_i-\bar{x}_A,\bar{x}_A-\bar{x}_B\rangle
+ 2\langle \bar{x}_A-\bar{x}_B,\bar{x}_B-x_j\rangle \\
& + 2\langle x_i-\bar{x}_A,\bar{x}_B-x_j\rangle.
\end{align*}
When averaging over $i\in A$ and $j\in B$, the mixed inner-product terms vanish because
$\sum_{i\in A}(x_i-\bar{x}_A)=0 = \sum_{j\in B}(x_j-\bar{x}_B)$. Thus
\begin{align*}
\frac{1}{ab}\sum_{i\in A}\sum_{j\in B}&\|x_i-x_j\|^2
= \frac{1}{a}\sum_{i\in A}\|x_i-\bar{x}_A\|^2 \\
&
+ \frac{1}{b}\sum_{j\in B}\|x_j-\bar{x}_B\|^2 
 + \|\bar{x}_A-\bar{x}_B\|^2,
\end{align*}
which is exactly \eqref{eq:cross-sum}.
\end{proof}

\subsection{Two key lemmas}
In this Section, we provide two key lemmas for the robustness analysis of MultiKrum.

\begin{lemma}\label{lmm:mkra2}
  \introbm{}
  Let $\Sms \in \parts_{m}^n$ denote $m$ indices of $[n]$ that minimize $s(x_\cdot)$.
  Then
  \begin{equation}\label{eq:mkra-ineq-final}
    \frac{1}{m} \sum_{i \in \Sms} s(x_i) \le 2 \, \Sigma_S.
  \end{equation}
\end{lemma}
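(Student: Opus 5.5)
The strategy is to bound the average score over $\Sms$ by the average score over the honest set $S$, and then control the latter with the pairwise identity of \cref{lmm:toolbox}.

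\textbf{Step 1: pass from $\Sms$ to $S$.} Because $\Sms$ consists of the $m$ indices with smallest scores, the average of $s$ over $\Sms$ is at most the average of the $m$ smallest scores among indices of $S$. The latter is in turn at most the average over all of $S$. Since $m \le n-f = |S|$, this gives
\[
\frac{1}{m}\sum_{i\in\Sms} s(x_i) \;\le\; \frac{1}{n-f}\sum_{i\in S} s(x_i).
\]
The justification is elementary: $\Sms$ contains the globally smallest $m$ scores, so its sum is at most the sum of any $m$ scores. The mean of the $m$ smallest values of a multiset is at most the mean of the whole multiset.

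\textbf{Step 2: bound each score by an average over $S$.} Fix $i\in S$. The score $s(x_i)$ averages $\|x_i-x_j\|^2$ over the $n-f$ indices $j\in\nhd_i$, which are the indices closest to $x_i$. Now $S$ is also a set of $n-f$ indices. The sum of the $n-f$ smallest values $\|x_i-x_j\|^2$ is therefore at most their sum over $j\in S$. Hence
\[
s(x_i) \le \frac{1}{n-f}\sum_{j\in S}\|x_i-x_j\|^2 .
\]
This step is the crux of the argument. Its only subtlety is that ties in $\nhd$ are harmless, because $\nhd_i$ always realises the minimal sum.

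\textbf{Step 3: apply \cref{lmm:toolbox}.} Averaging the bound of Step 2 over $i\in S$ gives
\[
\frac{1}{n-f}\sum_{i\in S}s(x_i) \le \frac{1}{(n-f)^2}\sum_{i,j\in S}\|x_i-x_j\|^2 .
\]
\cref{lmm:toolbox} with $A=B=S$ identifies the right-hand side as exactly $2\Sigma_S$. Chaining this with Step 1 yields \eqref{eq:mkra-ineq-final}.

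I expect no genuine obstacle. The only care needed is in Step 1, where we compare order statistics: $\Sms$ is taken over all of $[n]$ while the comparison set lies inside $S$, and $m \le |S|$ is exactly what is required for this comparison. The condition $n-2f>0$ is not used.
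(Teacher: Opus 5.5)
Your proof is correct and follows the paper's argument essentially step for step. You compare the average score over $\Sms$ to the average over $S$, bound each $s(x_i)$ for $i\in S$ by the average of $\|x_i-x_j\|^2$ over $j\in S$ using the nearest-neighbour property of $\nhd_i$, and conclude with the pairwise identity of \cref{lmm:toolbox}. Your two-stage justification of Step 1 (going through the $m$ smallest scores within $S$, which uses $m\le n-f$) spells out what the paper states in one line.
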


\begin{proof}
By definition, $\Sms$ is the set of $m$ indices with minimal $s(x_i)$.
Since the average over the $m$ smallest elements is no larger than  the average over
all elements in $S$, we have
\begin{equation}\label{eq:avg-min-leq-avg-all-final}
\frac{1}{m} \sum_{i \in \Sms} s(x_i)  \le  \frac{1}{|S|} \sum_{i \in S} s(x_i).
\end{equation}
Next, note that each $\nhd(x_i)$ contains the $n-f$ closest neighbors of $x_i$ in $\xcol$.
Hence, for each $i \in S$, averaging over $\nhd(x_i)$ is always smaller than averaging over the larger set $S$:
\[
s(x_i) = \frac{1}{|\nhd(x_i)|} \sum_{j \in \nhd(x_i)} \|x_j - x_i\|^2
\le \frac{1}{|S|} \sum_{j \in S} \|x_j - x_i\|^2.
\]
Averaging the previous equation over $i \in S$, and then using \cref{lmm:toolbox}, we obtain
\begin{equation}\label{eq:prfc}
\frac{1}{|S|} \sum_{i \in S} s_i
\le \frac{1}{|S|^2} \sum_{i,j \in S} \|x_j - x_i\|^2 = 2 \Sigma_S.
\end{equation}
Combining \cref{eq:avg-min-leq-avg-all-final,eq:prfc} concludes the proof.
\end{proof}

\begin{lemma}\label{lmm:control-xi-xS}
  \introb{}
    Then, there holds for any $x \in \bbR^d$ and any $\alpha>0$
    \begin{equation}\label{eq:control-xi-xS}
      \|x - \bar{x}_S\|^2 \;\le\; \frac{n-f}{n-2f} \left( (1 + \alpha) \,s(x) \;+\; \left( 1 + \frac{1}{\alpha} \right)  \Sigma_S \right).
    \end{equation}
\end{lemma}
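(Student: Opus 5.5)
The idea is to compare $x$ with the points of $\nhd(x)\cap S$. Since $|\nhd(x)|=n-f$ and $|S|=n-f$ inside $[n]$, the set $T \eqdef \nhd(x)\cap S$ has size $t \ge n-2f>0$.

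First, for each $j\in T$, expand $x-\bar{x}_S = (x - x_j) + (x_j - \bar{x}_S)$. Young's inequality with parameter $\alpha>0$ then gives
\[
\|x-\bar{x}_S\|^2 \le (1+\alpha)\|x-x_j\|^2 + \left(1+\frac{1}{\alpha}\right)\|x_j-\bar{x}_S\|^2 .
\]
Averaging this over $j\in T$ yields
\[
\|x-\bar{x}_S\|^2 \le (1+\alpha)\frac{1}{t}\sum_{j\in T}\|x-x_j\|^2 + \left(1+\frac{1}{\alpha}\right)\frac{1}{t}\sum_{j\in T}\|x_j-\bar{x}_S\|^2 .
\]

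The first sum is bounded by enlarging $T$ to $\nhd(x)$, using nonnegativity of the terms:
\[
\frac{1}{t}\sum_{j\in T}\|x-x_j\|^2 \le \frac{1}{t}\sum_{j\in \nhd(x)}\|x-x_j\|^2 = \frac{n-f}{t}\, s(x) \le \frac{n-f}{n-2f}\, s(x).
\]
The second sum is handled the same way, enlarging $T$ to $S$:
\[
\frac{1}{t}\sum_{j\in T}\|x_j-\bar{x}_S\|^2 \le \frac{1}{t}\sum_{j\in S}\|x_j-\bar{x}_S\|^2 = \frac{n-f}{t}\,\Sigma_S \le \frac{n-f}{n-2f}\,\Sigma_S.
\]
Substituting both bounds and factoring out $\frac{n-f}{n-2f}$ gives exactly \eqref{eq:control-xi-xS}.

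The only genuinely delicate point is the counting argument giving $t\ge n-2f$. Because the bound must hold for every $x\in\bbR^d$, including $x$ outside the collection, we need this count for an arbitrary $x$. It holds since $\nhd(x)\in\parts^n_{n-f}$ regardless of $x$ and regardless of how ties are broken: $|T| = |\nhd(x)| + |S| - |\nhd(x)\cup S| \ge 2(n-f)-n = n-2f$. The hypothesis $n-2f>0$ guarantees $T\neq\emptyset$, so the averages over $T$ are well defined. Beyond that, the argument is just Young's inequality together with the monotonicity of sums of nonnegative terms.
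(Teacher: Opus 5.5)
Your proof is correct and matches the paper's argument essentially step for step. You intersect $\nhd(x)$ with $S$ to get a set of size at least $n-2f$, average Young's inequality over it, and enlarge the two sums to $\nhd(x)$ and $S$ respectively; your count $|\nhd(x)\cap S| = |\nhd(x)|+|S|-|\nhd(x)\cup S| \ge n-2f$ is stated more cleanly than in the paper, whose displayed version of this step swaps union and intersection.
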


\begin{proof}
  For $x \in \bbR^d$, define
  \(
    U(x) := S \cap \nhd(x),
  \)
  where $\nhd(x)$ is the set of the $n-f$ nearest points to $x$ in $\xcol$.
  Since $|S| = |\nhd(x)| = n-f$, and $|S \cap \nhd(x)| \le n$, we have
  \[
    |U(x)| = |S \cup \nhd(x)| = |S| + |\nhd(x)| - |S \cap \nhd(x)| \ge n - 2f.
  \]
  We decompose
  \[
    \|x - \bar x_S\|^2 = \frac{1}{|U(x)|} \sum_{j \in U(x)} \|x - x_j + x_j - \bar x_S\|^2.
  \]
   Applying Young's inequality \eqref{eq:Young} with a parameter $\alpha>0$
  and points $x - x_j$ and $x_j - \bar x_S$, we get
  \begin{align}\label{eq:prfd}
    \|x - \bar x_S\|^2
    &\le 
    (1+\alpha) \frac{1}{|U(x)|} \sum_{j \in U(x)} \|x - x_j\|^2 \\
    &+
    \left(1+\frac{1}{\alpha}\right) \frac{1}{|U(x)|} \sum_{j \in U(x)} \|x_j - \bar x_S\|^2. \nonumber
  \end{align}
  The first term is controlled using $U(x) \subset \nhd(x)$:
  \begin{align*}
    \frac{1}{|U(x)|}
    & \sum_{j \in U(x)} \|x - x_j\|^2
    \le \frac{1}{|U(x)|} \sum_{j \in \nhd(x)} \|x - x_j\|^2 \\
    &= \frac{n-f}{|U(x)|} s(x)
    \le \frac{n-f}{n-2f} s(x).
  \end{align*}
  The second term is controlled using $U(x) \subset S$:
  \begin{align*}
    \frac{1}{|U(x)|} & \sum_{j \in U(x)} \|x_j - \bar x_S\|^2
    \le  \frac{1}{|U(x)|} \sum_{j \in S} \|x_j - \bar x_S\|^2 \\
    &= \frac{n-f}{|U(x)|} \Sigma_S
     \le \frac{n-f}{n-2f} \Sigma_S.
  \end{align*}
  The result follows by plugging the bounds on the two terms in \cref{eq:prfd}.
\end{proof}

\subsection{Two upper-bounds on MultiKrum's \texorpdfstring{$\kappa^\star$}{k*}}
\label{sec:proofs}

In this Section, we prove two upper-bounds on the robustness coefficient of $m$-MultiKrum.
\Cref{theo:boundMultikrumconstant} provides a first, uniform, upper bound; \cref{theo:boundMultikrumdecreasing} provides a second, decreasing, upper bound.
We conclude with the proof of \cref{theo:boundKrumfinal}.

\begin{restatable}{theorem}{mkrumconstant}\label{theo:boundMultikrumconstant}
  \introbm{}
Then, the robustness coefficient of $m$-MultiKrum admits the following upper-bound
\begin{equation*}
\kappa^\star_m \le \kconst \eqdef
(\sqrt{2} + 1)^2\frac{n-f}{n-2f}.
\end{equation*}
\end{restatable}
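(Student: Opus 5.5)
We bound the squared distance between the MultiKrum output and $\bar x_S$ by the average, over the selected indices, of the squared distances of each selected point to $\bar x_S$. We then control each of those distances with \cref{lmm:control-xi-xS}, and control the resulting average score with \cref{lmm:mkra2}. Optimizing the Young parameter $\alpha$ will produce the constant $(\sqrt2+1)^2$.

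Fix $\xcol$ and $S\in\parts_{n-f}^n$, and write $F_m=\frac1m\sum_{i\in\Sms}x_i$. Since $F_m-\bar x_S=\frac1m\sum_{i\in\Sms}(x_i-\bar x_S)$, convexity of $\|\cdot\|^2$ (Jensen) gives
\[
\|F_m-\bar x_S\|^2\le \frac1m\sum_{i\in\Sms}\|x_i-\bar x_S\|^2 .
\]
We apply \cref{lmm:control-xi-xS} at $x=x_i$ for each $i\in\Sms$, with a common $\alpha>0$, and average over $i\in\Sms$:
\[
\|F_m-\bar x_S\|^2\le \frac{n-f}{n-2f}\Big((1+\alpha)\frac1m\sum_{i\in\Sms}s(x_i)+\Big(1+\frac1\alpha\Big)\Sigma_S\Big).
\]
By \cref{lmm:mkra2} the averaged score is at most $2\Sigma_S$. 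Hence
\[
\|F_m-\bar x_S\|^2\le \frac{n-f}{n-2f}\Big(2(1+\alpha)+1+\frac1\alpha\Big)\Sigma_S .
\]

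It remains to minimize $g(\alpha)=3+2\alpha+1/\alpha$ over $\alpha>0$. The minimum is at $\alpha=1/\sqrt2$, where
\[
g(1/\sqrt2)=3+2\sqrt2=(\sqrt2+1)^2 .
\]
Dividing by $\Sigma_S$ and taking the supremum over $\xcol$ and $S$ gives the claim. The degenerate case $\Sigma_S=0$ is excluded by the $0/0=-\infty$ convention. If $\Sigma_S=0$ with a nonzero numerator, the ratio would be infinite, but the bound above forces the numerator to vanish as well, so the convention applies.

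There is no real obstacle here, since both lemmas are already in hand. The only points needing care are the following.
\begin{itemize}
\item The Jensen step holds for any choice of $\Sms$, including arbitrary tie-breaking.
\item \cref{lmm:control-xi-xS} applies to $x_i$ regardless of whether $i\in S$.
\item A single $\alpha$ is used uniformly over $i$, so the averaging commutes with the bound.
\end{itemize}
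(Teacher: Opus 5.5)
Your proposal is correct and follows the paper's proof essentially step for step. You apply Jensen over $S_m^\star$, average \cref{lmm:control-xi-xS} with a common $\alpha$, bound the averaged score by $2\Sigma_S$ via \cref{lmm:mkra2}, and minimize $3+2\alpha+1/\alpha$ at $\alpha=1/\sqrt2$. Your remark that the bound forces the numerator to vanish when $\Sigma_S=0$ is a small detail the paper leaves implicit.
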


\begin{proof}
We first observe that, by Jensen's inequality,
$$
\|\bar x_{\Sms} - \bar x_S\|^2 = \Big\| \frac{1}{m} \sum_{i \in \Sms} ( x_i - \bar{x}_S ) \Big\|^2 \overset{\eqref{eq:Jensen}}{\le} \frac{1}{m} \sum_{i \in \Sms} \|x_i - \bar{x}_S\|^2
$$
Averaging the bound of \cref{lmm:control-xi-xS}, for $i$ in $\Sms$, and then using \cref{lmm:mkra2}, we obtain for any $\alpha>0$
\begin{align*}
  \frac{1}{m} &\sum_{i \in \Sms} \|x_i - \bar{x}_S\|^2
   \\
  &\overset{\eqref{eq:control-xi-xS}}
  {\le}\frac{n-f}{n-2f} \left( (1 + \alpha) \frac{1}{m} \sum_{i \in \Sms}s(x_i) \;+\; \left( 1 + \frac{1}{\alpha} \right)  \Sigma_S \right) \\
  &\overset{\eqref{eq:mkra-ineq-final}}{\le} \frac{n-f}{n-2f}  \left( 2(1+\alpha)  + \left(1 + \frac{1}{\alpha} \right) \right) \Sigma_S
\end{align*}
In view of $\min_{\alpha>0} 2 (1+\alpha)  + \left(1 + \frac{1}{\alpha} \right)   = (\sqrt{2} + 1)^2$, there holds
\begin{equation*}
  \|\bar x_{\Sms} - \bar x_S\|^2
  \le
  \frac{n-f}{n-2f}  \left( \sqrt{2} + 1 \right)^2 \Sigma_S,
\end{equation*}
which provides the claimed bound on $\kappa$.
\end{proof}

\begin{restatable}{theorem}{mkrumvariable}\label{theo:boundMultikrumdecreasing}
  \introbm{}
Then, the robustness coefficient of $m$-MultiKrum admits the following upper-bound
\begin{equation*}%
    \kappa_m^\star \le \kdec(m) = \frac{n-f}{n-2f}
    \begin{cases}
    \kdeca(m) &\text{if } m \le f \\
    \kdecb(m) &\text{else}
    \end{cases}
\end{equation*}
with
\begin{align*}
  \kdeca(m) &= \left(\frac{\sqrt{n-2f}}{\sqrt{m}} + \sqrt{2} + 1
    \right)^{\!2}, \\
  \kdecb(m) &=
  \left(\frac{\sqrt{n-2f}}{\sqrt{m}} + \frac{\sqrt{2f}}{\sqrt{m}} + \frac{f}{m}\right)^2.
\end{align*}
\end{restatable}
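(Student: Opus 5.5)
The plan is to decompose $\Sms$ into its honest and Byzantine parts and to bound each contribution to $\bar x_{\Sms}-\bar x_S$ separately, in norm rather than squared norm, then combine them with the triangle inequality and square at the end. Set $H = \Sms\cap S$ and $B = \Sms\setminus S$, so that $|H|\le m$ and $|B|\le \min(m,f)$, since only $f$ indices lie outside $S$. Then
\[
m\,(\bar x_{\Sms}-\bar x_S) \;=\; \sum_{i\in H}(x_i-\bar x_S) \;+\; \sum_{i\in B}(x_i-\bar x_S).
\]

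\emph{Honest part.} By Cauchy--Schwarz and $H\subset S$,
\[
\Big\|\sum_{i\in H}(x_i-\bar x_S)\Big\| \le \sqrt{|H|}\Big(\sum_{i\in S}\|x_i-\bar x_S\|^2\Big)^{1/2} \le \sqrt{m (n-f)\Sigma_S}.
\]
After dividing by $m$, this gives $\sqrt{\frac{n-f}{n-2f}}\,\frac{\sqrt{n-2f}}{\sqrt m}\sqrt{\Sigma_S}$, which is exactly the first summand in both $\kdeca$ and $\kdecb$.

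\emph{Byzantine part.} Cauchy--Schwarz gives $\|\sum_{i\in B}(x_i-\bar x_S)\|\le \sqrt{|B|}\,(\sum_{i\in B}\|x_i-\bar x_S\|^2)^{1/2}$. For each $i\in B$, I apply \cref{lmm:control-xi-xS} and sum over $B$. The scores are bounded via $\sum_{i\in B}s(x_i)\le\sum_{i\in\Sms}s(x_i)\le 2m\Sigma_S$ (nonnegativity of scores together with \cref{lmm:mkra2}). This yields
\[
\sum_{i\in B}\|x_i-\bar x_S\|^2 \le \frac{n-f}{n-2f}\Big(2m(1+\alpha) + |B|\big(1+\tfrac1\alpha\big)\Big)\Sigma_S .
\]
Using $\min_{\alpha>0}\, a(1+\alpha)+b(1+1/\alpha) = (\sqrt a+\sqrt b)^2$, the square root of the bracket becomes $\sqrt{2m}+\sqrt{|B|}$. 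The Byzantine contribution divided by $m$ is therefore at most
\[
\sqrt{\tfrac{n-f}{n-2f}}\;\frac{\sqrt{|B|}}{m}\big(\sqrt{2m}+\sqrt{|B|}\big)\sqrt{\Sigma_S}.
\]
This expression is increasing in $|B|$. When $m>f$, I plug in $|B|\le f$ and get $\frac{\sqrt{2f}}{\sqrt m}+\frac fm$. When $m\le f$, I plug in $|B|\le m$ and get $\sqrt2+1$.

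Adding the two parts by the triangle inequality and squaring gives $\|\bar x_{\Sms}-\bar x_S\|^2\le \frac{n-f}{n-2f}\kdeca(m)\Sigma_S$ or $\frac{n-f}{n-2f}\kdecb(m)\Sigma_S$ according to the case, which is the claim. The step I expect to be the main obstacle is the Byzantine part. The key point is that the score budget $2m\Sigma_S$ from \cref{lmm:mkra2} is shared across all of $\Sms$ rather than paid per index. The constant $1+\alpha$ must therefore be optimized jointly with the factor $|B|$, not per point as in \cref{theo:boundMultikrumconstant}, since the per-point approach would lose the $f/m$ gain. One also has to check that the honest term is not double counted, which holds because $H$ and $B$ are disjoint subsets of $\Sms$.
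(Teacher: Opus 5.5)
Your proposal is correct and is essentially the paper's proof: the same split of $S_m^\star$ into $S_m^\star\cap S$ and $S_m^\star\setminus S$, Cauchy--Schwarz in place of the Jensen steps, the same use of \cref{lmm:control-xi-xS} and \cref{lmm:mkra2} with a single $\alpha$ optimized to $(\sqrt{2m}+\sqrt{|B|})^2$, and the same worst cases $|H|\le m$, $|B|\le\min(m,f)$. The only cosmetic difference is that you add the two contributions with the triangle inequality and then square, whereas the paper applies Young's inequality and optimizes $\alpha_1$ to get $(\sqrt A+\sqrt B)^2$; these give the same bound.
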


\begin{proof}
Let $\Sms$ denote the subset of $m$ indices selected by MultiKrum and write
\[
u:=|\Sms\setminus S|,\qquad v:=|\Sms\cap S|,\qquad u+v=m.
\]
Necessarily $0 \le u \le \min(m,f)$ and $\max(0, m-f) \le v \le m$.
For any $\alpha_1>0$, \cref{lmm:Young} gives the decomposition
\begin{align}
  \|\bar x_{\Sms}-\bar x_S\|^2
  &= \Big\| \frac{1}{m} \sum_{i\in \Sms} (x_i - \bar{x}_S) \Big\|^2 \nonumber \\
  &\overset{\eqref{eq:Young}}{\le} \left(1+{\alpha_1}\right)A + \left(1+\frac{1}{\alpha_1}\right)B, \label{eq:prfyoung}
\end{align}
where
\(
A=\Big\|\frac{1}{m}\sum_{i\in \Sms\cap S}(x_i-\bar x_S)\Big\|^2\) and
\(B=\Big\|\frac{1}{m}\sum_{i\in \Sms\setminus S}(x_i-\bar x_S)\Big\|^2.
\)
To bound $A$, we first invoke Jensen's inequality, and then use that $\Sms \cap S \subseteq S$:
\begin{align*}
A  &= \frac{v^2}{m^2} \Big\|\frac{1}{v}\sum_{i\in \Sms\cap S}(x_i-\bar x_S)\Big\|^2
\overset{\eqref{eq:Jensen}}{\le} \frac{v}{m^2}\sum_{i\in \Sms\cap S}\|x_i-\bar x_S\|^2 \\
& \le
\frac{v}{m^2} \sum_{i\in S}\|x_i-\bar x_S\|^2
=
\frac{v(n-f)}{m^2}\,\Sigma_S.
\end{align*}
To bound $B$, we first apply Jensen's inequality, and then invoke \cref{lmm:control-xi-xS}: there holds for any $\alpha_2>0$
\begin{align*}
  &B
  = \frac{u^2}{m^2} \Big\|\frac{1}{u}\sum_{i\in \Sms\setminus S}(x_i-\bar x_S)\Big\|^2 \\
  &\overset{\eqref{eq:Jensen}}{\le} \frac{u}{m^2} \sum_{i\in \Sms\setminus S} \| x_i-\bar x_S\|^2 \\
  &\overset{\eqref{eq:control-xi-xS}}{\le} \frac{u}{m^2} \sum_{i\in \Sms\setminus S} \frac{n-f}{n-2f}\left(\left(1+\alpha_2 \right)s(x_i) + \left(1+\frac{1}{\alpha_2}\right)\Sigma_S\right) \\
  &= \frac{u}{m^2}  \frac{n-f}{n-2f} \left(\left(1+\alpha_2\right) \sum_{i\in \Sms\setminus S} s(x_i) + \left(1+\frac{1}{\alpha_2}\right)u \Sigma_S\right).
\end{align*}
The MultiKrum selection picks indices with small local scores: \cref{lmm:mkra2} implies $\sum_{i\in \Sms}s(x_i)\le m2\Sigma_S$, and thus
\[
\sum_{i\in \Sms\setminus S}s(x_i)
\le \sum_{i\in \Sms}s(x_i)
\le 2 m \Sigma_S.
\]
Substituting this bound into the previous display gives
\[
B \le \frac{u}{m^2}  \frac{n-f}{n-2f} \left(\left(1+\alpha_2\right) 2m + \left(1+\frac{1}{\alpha_2}\right)u \right) \Sigma_S.
\]
Minimizing the bracket with respect to $\alpha_2>0$ yields the explicit minimizer $\alpha_2^\star=\sqrt{u/(2m)}$, and the minimal value equals $(\sqrt{2m}+\sqrt{u})^2$. Therefore
\[
B \le \frac{u}{m^2}  \frac{n-f}{n-2f} (\sqrt{2m}+\sqrt{u})^2\,\Sigma_S.
\]
Inserting the bounds on $A$ and $B$ into \cref{eq:prfyoung}, and optimizing over $\alpha_1>0$ (the optimal choice being $\alpha_1=\sqrt{B/A}$ with minimal value $(\sqrt A + \sqrt{B})^2$) yields the per-instance coefficient
\[
\kdec(m, u)
\eqdef{} \frac{n-f}{m^2}\Bigg(
\sqrt{v}
+(\sqrt{2m}+\sqrt{u})\sqrt{\frac{u}{n-2f}}
\Bigg)^{\!2},
\]
such that \(\|\bar x_{\Sms}-\bar x_S\|^2\le\kappa_2(m, u)\,\Sigma_S\).

Using that $v \le m$ and $u \le u_{\max} \triangleq \min(m, f)$, we obtain the upper bound $\kdec(m)$:
\begin{align*}
  \kdec(m,u)
  &\le \kdec(m) \\
  &\triangleq \frac{n-f}{m^2}\Bigg(
    \sqrt{m} + \left(\sqrt{2m}+\sqrt{u_{\max}}\right)\sqrt{\frac{u_{\max}}{n-2f}}
    \Bigg)^{\!2} \\
  &=\frac{n-f}{m} \Bigg(
    1 + \left(\sqrt{2}+\sqrt{\frac{u_{\max}}{m}}\right)\sqrt{\frac{u_{\max}}{n-2f}}
    \Bigg)^{\!2} \\
  &=\frac{1}{m} \frac{n-f}{n-2f} \left(
    \sqrt{n-2f} + \sqrt{2}\sqrt{u_{\max}} + \frac{u_{\max}}{\sqrt{m}}
    \right)^{\!2}.
\end{align*}
Finally, note that, when $m \le f$, $u_{\max} = m$, so that $\kdec(m)$ simpifies to
\begin{equation*}
    \kdeca(m) \eqdef \frac{n-f}{n-2f} \left(
    \frac{\sqrt{n-2f}}{\sqrt{m}} + \sqrt{2} + 1
    \right)^{\!2}.
\end{equation*}
When $m \ge f$, $u_{\max} = f$, so that $\kdec(m)$ simpifies to
\begin{equation*}
    \kdecb(m) \eqdef \frac{1}{m} \frac{n-f}{n-2f} \left(
    \sqrt{n-2f} + \sqrt{2f} + \frac{f}{\sqrt{m}}
    \right)^{\!2}. \qedhere
\end{equation*}
\end{proof}

We are now ready to prove \cref{theo:boundKrumfinal}.
\begin{proof}[Proof of \cref{theo:boundKrumfinal}]
\Cref{theo:boundMultikrumconstant,theo:boundMultikrumdecreasing} provide two upper bounds $\kconst$ and $\kdec(m)$ on the optimal robustness coefficient of $m$-MulitKrum.
Thus, this coefficient $\kappa^\star_m$ is bounded by their minimum:
\begin{align*}
  \kappa^\star_m
  &\le \min(\kconst, \kdec(m))\\
  &= \min\left(\kconst, \frac{n-f}{n-2f}\begin{cases} \kdeca(m) & \text{ if } m \le f \\ \kdecb(m) &\text{ else}\end{cases}\right).
\end{align*}
Note first that, for any $m \in [n-f]$,
\[\kconst = (\sqrt{2}+1)^2 < \left(\frac{\sqrt{n-2f}}{\sqrt{m}} + \sqrt{2} + 1 \right)^{\!2} = \kdeca(m).\]
Then, observe that \(\kdecb\) is a decreasing function, such that \(\kdecb(f) = \left(\frac{\sqrt{n-2f}}{\sqrt{f}} + \sqrt{2} + 1 \right)^{\!2} > \left(\sqrt{2} + 1 \right)^{\!2} = \kconst\).
Therefore, $\kconst < \kdecb(m)$ for $m \in [f]$.
The result follows.
\end{proof}

%% file: Body/experiments.tex
We report in \cref{fig:boundcomp} the evolution on the upper and lower bounds on $\kappa^\star_m$ obtained in \cref{sec:robustnessmkrum}: \cref{theo:boundKrumfinal} for the upper bound valid for all $m$, and \cref{th:lowerbound-krum,theo:lowerbound-kappanf} for the upper bound $m=1$ and $m=n-f$.

\begin{figure}
  \centering
  \includegraphics[width=0.65\linewidth]{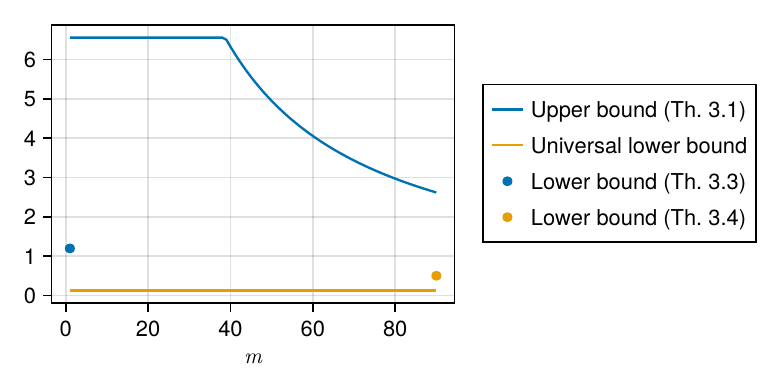}
  \vspace{-1ex}
  \caption{Illustration of the upper and lower bounds on $\kappa^\star_m$, for $n=100$, $f=10$, and $m$ varying between $1$ and $n-f=90$.\vspace{-2ex}}
  \label{fig:boundcomp}
\end{figure}

We report in \cref{fig:boundcomp-c} a numerical study on $m^\dagger(n, f)$ \ie{} the first value $m$ for which the upper bound of \cref{th:lowerbound-krum} starts decreasing.
We compute numerically $m^\dagger$ using a bisection search, and show a normalized value $m^\dagger(n,f) / n$.
We consider three families of \((n, f)\), each with a ratio of adversaries $f/n$ of $\{10\%, 1\%, 0.1\%\}$.
The three corresponding curves are flat, with values monotonically decreasing with the ratio.
This shows that the quantity $m^\dagger$, depends only on the proportion of adversaries $f/n$, rather than on both variables $f$ and $n$.
The values appear to converge to \((\sqrt{2}+1)^{-2}\) as $f/n\to 0$, as guaranteed by \cref{prop:mstar}.

\begin{figure}
  \centering
  \includegraphics[width=0.65\linewidth]{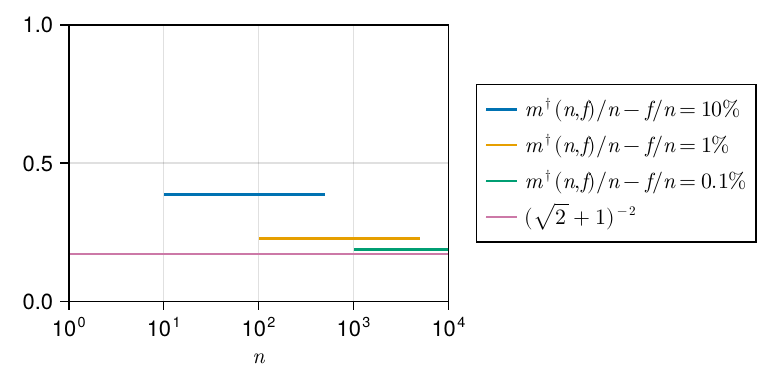}
  \vspace{-1ex}
  \caption{Illustration of $m^\dagger(n, f) / n$, where $m^\dagger(n, f)$ denotes the smallest $m$ such that $(\sqrt{2}+1)^2>\kdecb(m)$, and of the bound $(1 + \sqrt{2})^{-2}$ provided by \cref{prop:mstar}.
  \vspace{-2ex}}
  \label{fig:boundcomp-c}
\end{figure}

%% file: Body/conclusion.tex
In this work, we study the robustness of aggregation rules for mean estimation in the presence of adversaries. We introduce the robustness coefficient $\kappa^\star$, an optimal measure that refines existing notions of robustness.

We provide the first robustness guarantees for MultiKrum by deriving upper and lower bounds on its robustness coefficient, and, as a by-product, improve the best-known bounds for Krum. Our analysis reveals that the upper bound decays with the aggregation parameter $m$, in line with empirical observations suggesting increased robustness of MultiKrum and its practical preference over Krum.

Finally, we open the problem of computing, or tightly bounding, the robustness coefficients $\kappa^\star$ of other classical aggregation rules, such as the geometric median, the trimmed mean, and the coordinate-wise median.

%% file: Body/app_recalls.tex
\label{sec:two-elementary-lemma}

We recall two elementary lemma: a known consequence of Young's inequality, and Jensen's inequality for the squared norm.
\begin{lemma}\label{lmm:Young}
  There holds, for any $x, y \in \bbR^n$ and $\alpha>0$
  \begin{equation}
    \label{eq:Young}
    \| x + y \|^2 \le \left(1 + {\alpha}\right) \|x\|^2 + \left(1 + \frac{1}{\alpha}\right)\|y\|^2.
  \end{equation}
\end{lemma}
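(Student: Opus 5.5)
The plan is to expand the square and discard a nonnegative term. Fix $x,y\in\bbR^n$ and $\alpha>0$, and write
\[
\|x+y\|^2 = \|x\|^2 + \|y\|^2 + 2\langle x, y\rangle .
\]
Everything then reduces to bounding the cross term by
\[
2\langle x,y\rangle \le \alpha\|x\|^2 + \frac{1}{\alpha}\|y\|^2 .
\]

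To get this bound, I will use the nonnegativity of a single squared norm:
\[
0 \le \Big\| \sqrt{\alpha}\, x - \frac{1}{\sqrt{\alpha}}\, y \Big\|^2 = \alpha\|x\|^2 - 2\langle x,y\rangle + \frac{1}{\alpha}\|y\|^2 .
\]
Rearranging this inequality gives exactly the cross-term bound above.

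Substituting the cross-term bound into the expansion of $\|x+y\|^2$ gives $(1+\alpha)\|x\|^2 + (1+1/\alpha)\|y\|^2$, which is \eqref{eq:Young}. The only step needing any care is choosing the rescaling $\sqrt{\alpha}$ versus $1/\sqrt{\alpha}$ so that the weights come out as $\alpha$ and $1/\alpha$. As a cross-check, Cauchy--Schwarz followed by the scalar AM--GM inequality $2ab\le \alpha a^2 + b^2/\alpha$ gives the same bound.
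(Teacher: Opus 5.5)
Your proof is correct and is the same argument as the paper's. You expand $\|x+y\|^2$ and bound the cross term $2\langle x,y\rangle \le \alpha\|x\|^2 + \frac{1}{\alpha}\|y\|^2$ using the nonnegativity of $\|\sqrt{\alpha}\,x - \frac{1}{\sqrt{\alpha}}\,y\|^2$.
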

\begin{proof}
  Young's inequality guarantees that, for any $x, y \in \bbR^n$ and $\alpha>0$, there holds
  \(  2\langle x, y \rangle \le \alpha  \|x\|^2 + \frac{1}{\alpha} \|y\|^2\).
  Recall that this follows directly from expanding the square in the identity $\| \sqrt{\alpha} x -  \frac{1}{\sqrt{\alpha}}y\|^2 \ge 0$.
  The claimed bound follows from developing the square:
  \begin{align*}
    \| x + y \|^2 &= \|x\|^2 + 2\langle x, y\rangle + \|y\|^2 \\
    &\le \left(1 + \alpha\right)\|x\|^2 + \left(1 + \frac{1}{\alpha}\right)\|y\|^2.\qedhere
  \end{align*}
\end{proof}

\begin{lemma}[Jensen's inequality]\label{lmm:Jensen}
  Consider $p$ points $x_1, \dots, x_p$ in $\bbR^n$, and $p$ positive reals $\lambda_1, \dots, \lambda_p$.
  Then,
  \begin{equation}
    \label{eq:Jensen}
    \left\| \frac{\sum_i \lambda_i x_i}{\sum_i \lambda_i} \right\|^2 \le \frac{\sum_i \lambda_i \|x_i\|^2}{\sum_i \lambda_i}
  \end{equation}
\end{lemma}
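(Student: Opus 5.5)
The plan is to reduce \eqref{eq:Jensen} to a weighted analogue of the mean--variance identity of \cref{lmm:toolbox}; no convexity machinery is needed. First I normalize the weights. Set $\Lambda \eqdef \sum_{i} \lambda_i > 0$ and $w_i \eqdef \lambda_i/\Lambda$, so that $w_i > 0$ and $\sum_i w_i = 1$. Let $y \eqdef \sum_i w_i x_i$ denote the weighted barycenter. In this notation the left-hand side of \eqref{eq:Jensen} is $\|y\|^2$ and the right-hand side is $\sum_i w_i \|x_i\|^2$.

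Next I decompose each point around the barycenter, mirroring the proof of \cref{lmm:toolbox}. Write $x_i = y + (x_i - y)$ and expand the square:
\[
\|x_i\|^2 = \|y\|^2 + 2\langle y, x_i - y\rangle + \|x_i - y\|^2 .
\]
Multiplying by $w_i$ and summing over $i$, the cross term becomes $2\langle y, \sum_i w_i (x_i - y)\rangle$. This vanishes because $\sum_i w_i x_i = y$ and $\sum_i w_i = 1$.

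The summation therefore yields the exact identity
\[
\sum_i w_i \|x_i\|^2 = \|y\|^2 + \sum_i w_i \|x_i - y\|^2 .
\]
The last sum is nonnegative since every $w_i > 0$. Dropping it gives $\|y\|^2 \le \sum_i w_i \|x_i\|^2$, and substituting back $w_i = \lambda_i/\Lambda$ recovers \eqref{eq:Jensen}.

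There is no real obstacle here. The only points needing care are the normalization $\sum_i w_i = 1$, which is what makes the cross term vanish, and positivity of the weights, which makes the dropped term nonnegative. This identity also gives the slightly stronger statement that the gap equals the weighted variance; that form is the one implicitly used through \cref{lmm:toolbox} elsewhere in the paper.
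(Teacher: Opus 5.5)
Your proof is correct, but it takes a different route from the paper. The paper's proof is one line: it applies convexity of the squared norm with the normalized weights $\lambda_i/\sum_j \lambda_j$, i.e.\ the finite form of Jensen's inequality.

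You instead expand each point around the weighted barycenter $y$ and obtain the exact identity $\sum_i w_i\|x_i\|^2 = \|y\|^2 + \sum_i w_i\|x_i - y\|^2$. This is the weighted analogue of the mean--variance relation in Lemma~\ref{lmm:toolbox}, and you then drop the nonnegative variance term. Your route gains in three ways:
\begin{itemize}
\item it is self-contained and purely algebraic, needing neither convexity of $\|\cdot\|^2$ nor the passage from two-point convexity to $p$ points;
\item it identifies the gap in \eqref{eq:Jensen} exactly as the weighted variance $\sum_i w_i\|x_i-y\|^2$;
\item with positive weights, it therefore also characterizes the equality case (all $x_i$ equal).
\end{itemize}
The convexity route gains generality instead. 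It never uses the inner-product structure, so it holds verbatim for the square of an arbitrary norm, or any convex function. The price is that it says nothing quantitative about the gap.

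For the Euclidean setting of the paper, where \eqref{eq:Jensen} is only ever used as an inequality, both arguments are equally adequate.
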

\begin{proof}
    Follows from convexity of the squared norm  applied with weights $\lambda_i / \sum_j \lambda_j$.
\end{proof}

%% file: Body/app_lowerbound.tex
\begin{proposition}[Lower bound on \(\kappa_m^\star\) for the \(m\)-MultiKrum aggregator]
\label{prop:lowerbound-kappam}
Consider integers $f,n,m$ such that $n-3f > 0$ and $0 < m \leq n-f$.
Then, the robustness parameter \(\kappa_m^\star\) satisfies
\[
\kappa_m^\star \ge R(n, f, m),
\]
where, setting
\begin{align*}
  a &= a(n,f,m) = \min(m, n - 2f)
\end{align*}
we have
\[
R(n,f,m) \eqdef \frac{(n-f)^2}{f (n - 2f)} \left(\frac{a \cdot \frac{f}{n-f} + m -a}{m}\right)^2.
\]
\end{proposition}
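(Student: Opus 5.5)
The plan is to reuse the three-cluster construction from the proof of \cref{theo:lowerbound-kappanf}, now for general $m$, and to read off which points $m$-MultiKrum selects. Let $e=e_1$ and $\varepsilon\in(0,1)$ small, and take $f$ vectors equal to $-e$ (indices $1,\dots,f$), $n-2f$ vectors equal to $0$ (indices $f+1,\dots,n-f$), and $f$ vectors equal to $(1-\varepsilon)e$ (indices $n-f+1,\dots,n$). The reference set is $S=\{1,\dots,n-f\}$. As in \cref{theo:lowerbound-kappanf}, $\bar x_S=-\frac{f}{n-f}e$ and $\Sigma_S=\frac{f(n-2f)}{(n-f)^2}$.

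\textbf{Step 1: the scores.} For each point, the $n-f$ nearest neighbours are determined by the clusters.
\begin{itemize}
\item A zero vector sees $n-2f$ zeros and then $f$ copies of $(1-\varepsilon)e$, so $s(0)=\frac{f(1-\varepsilon)^2}{n-f}$.
\item A copy of $(1-\varepsilon)e$ sees its own $f$ copies and then the $n-2f$ zeros, so $s((1-\varepsilon)e)=\frac{(n-2f)(1-\varepsilon)^2}{n-f}$.
\item A copy of $-e$ sees its own $f$ copies and then the zeros, so $s(-e)=\frac{n-2f}{n-f}$.
\end{itemize}
Since $n>3f$ gives $f<n-2f$, we get $s(0)<s((1-\varepsilon)e)<s(-e)$.

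\textbf{Step 2: the selection.} MultiKrum first takes $a=\min(m,n-2f)$ zero vectors. It then takes $m-a$ copies of $(1-\varepsilon)e$, which is possible because $m-a\le (n-f)-(n-2f)=f$. No copy of $-e$ is ever selected. Within each cluster the points coincide, so arbitrary tie-breaking does not affect the output. Hence
\[
F_m=\frac{(m-a)(1-\varepsilon)}{m}\,e .
\]

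\textbf{Step 3: the ratio.} The bias is
\[
F_m-\bar x_S=\frac{(m-a)(1-\varepsilon)+m\frac{f}{n-f}}{m}\,e .
\]
Letting $\varepsilon\to0$ and dividing the squared bias by $\Sigma_S$ gives
\[
\kappa_m^\star\ge\frac{(n-f)^2}{f(n-2f)}\left(\frac{m\frac{f}{n-f}+m-a}{m}\right)^2 .
\]
Since $a\le m$, we have $m\frac{f}{n-f}\ge a\frac{f}{n-f}$, and the numerator inside the square is positive. Therefore this quantity is at least $R(n,f,m)$, which proves the claim.

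As a consistency check, for $m=n-f$ we have $a=n-2f$, and the bound reduces to $4\frac{f}{n-2f}$, in agreement with \cref{theo:lowerbound-kappanf}.

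The main point requiring care is Step 1. One must check the nearest-neighbour sets and the strict score ordering, which is exactly where $n>3f$ is needed. One must also make sure that the selected points are determined regardless of how ties are broken.
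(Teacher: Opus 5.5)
Your proof is correct and uses the same construction as the paper: the same three clusters, the same reference set $S$, and the same selection of $a$ zeros followed by $m-a$ copies of $(1-\varepsilon)e$. Your bias computation is actually the accurate one. The paper writes the numerator as $a\frac{f}{n-f}+(m-a)(1-\varepsilon)$ and claims the ratio equals $R$, which drops the $\frac{f}{n-f}$ offset of the selected $(1-\varepsilon)e$ points. Your $m\frac{f}{n-f}$ version, combined with $a\le m$, proves the stated bound and also the sharper one, which recovers $4\frac{f}{n-2f}$ at $m=n-f$ (where $R$ itself does not).
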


\begin{proof}
Let \(e = e_1 \in \mathbb{R}^d\) be the first canonical basis vector. Define \(\xcol = (x_i)_{i=1}^n\) by
\[
\begin{cases}
f \text{ vectors } &= -e, \\
n - 2f \text{ vectors } &= 0,\\
f \text{ vectors } &= (1-\varepsilon) e, 
\end{cases}
\]
with \(\varepsilon > 0\) arbitrarily small. The MultiKrum scores satisfy, for $1 \leq i \leq f$, $f+1 \leq j \leq n-f$, and $n-f+1\leq k \leq n$
\[
s(i) =  \frac{n-2f}{n-f}  , \quad 
s(j) = \frac{f}{n-f},  \quad 
s(k) = (1-\epsilon) \frac{n-2f}{n-f}.
\]
When $  (n-2f) \geq f$, the \(m\)-MultiKrum selects first $a$ zeros, then $(m-a)$ vectors $(1+ \epsilon)e$.
\begin{align*}
S_m^\star =     
    \underbrace{0, \ldots, 0}_{a}, \underbrace{(1-\varepsilon)e, \ldots, (1-\varepsilon)e}_{m-a} \}. 
\end{align*}

Consider the reference set
\[
S = \{ \underbrace{-e, \ldots, -e}_{f}, \underbrace{0, \ldots, 0}_{n - 2f} \},
\quad |S| = n - f.
\]
Their barycenters satisfy
\[
\bar{x}_{S_m^\star} = \frac{m-a}{m} (1-\varepsilon) e, \quad \bar{x}_S = -\frac{f}{n-f} e.
\]
The squared bias is
\[
\|\bar{x}_{S_m^\star} - \bar{x}_S\|^2 = \left(\frac{a \frac{f}{n-f} + (m-a) (1-\varepsilon)}{m}\right)^2.
\]
The variance of \(S\) is
\[
\Sigma_S = \frac{f (n - 2f)}{(n-f)^2}.
\]

Letting \(\varepsilon \to 0\), we obtain
\[
R(n, f, m) = \frac{\|\bar{x}_{S_m^\star} - \bar{x}_S\|^2}{\Sigma_S} = \frac{(n-f)^2}{f (n - 2f)} \left(\frac{a \frac{f}{n-f} + m-a}{m}\right)^2,
\]
which concludes the proof.
\end{proof}